\declaretheorem[name=Corollary,numberwithin=section]{coroll}
\DeclareRobustCommand{\ie}{i.e.,\@\xspace}
\DeclareRobustCommand{\wrt}{w.r.t.\@\xspace}
\newcommand{\EV}{\mathbb{E}}
\newcommand{\Norm}[2][\infty]{\left\|#2\right\|_{#1}}
\DeclareMathOperator*{\argmax}{arg\,max}
\title{An Intrinsically-Motivated Approach for Learning \\ Highly Exploring and Fast Mixing Policies}
\author{%
  Mirco Mutti \\
  Politecnico di Milano, Milan, Italy \\
  Universit\`a di Bologna, Bologna, Italy \\
  \texttt{mirco.mutti@polimi.it} \\
  % examples of more authors
   \And
   Marcello Restelli \\
   Politecnico di Milano, Milan, Italy \\
   \texttt{marcello.restelli@polimi.it} \\
}
\begin{document}

\maketitle

\begin{abstract}
What is a good exploration strategy for an agent that interacts with an environment in the absence of external rewards? Ideally, we would like to get a policy driving towards a uniform state-action visitation (highly exploring) in a minimum number of steps (fast mixing), in order to ease efficient learning of any goal-conditioned policy later on. Unfortunately, it is remarkably arduous to directly learn an optimal policy of this nature. In this paper, we propose a novel surrogate objective for learning highly exploring and fast mixing policies, which focuses on maximizing a lower bound to the entropy of the steady-state distribution induced by the policy. In particular, we introduce three novel lower bounds, that lead to as many optimization problems, that tradeoff the theoretical guarantees with computational complexity. Then, we present a model-based reinforcement learning algorithm, IDE$^{3}$AL, to learn an optimal policy according to the introduced objective. Finally, we provide an empirical evaluation of this algorithm on a set of hard-exploration tasks.
\end{abstract}

\section{Introduction}

In general, the Reinforcement Learning (RL) framework~\cite{sutton2018reinforcement} assumes the presence of a reward signal coming from a, potentially unknown, environment to a learning agent. When this signal is sufficiently informative about the utility of the agent's decisions, RL has proved to be rather successful in solving challenging tasks, even at a super-human level~\cite{mnih2015human,silver2017mastering}. However, in most real-world scenarios, we cannot rely on a well-shaped, complete reward signal. This may prevent the agent from learning anything until, while performing random actions, it eventually stumbles into some sort of external reward. Thus, what is a good objective for a learning agent to pursue, in the absence of an external reward signal, to prepare itself to learn efficiently, eventually, a goal-conditioned policy?

Intrinsic motivation~\cite{chentanez2005intrinsically,oudeyer2009topology} traditionally tries to answer this pressing question by designing self-motivated goals that favor exploration. In a curiosity-driven approach, first proposed in~\cite{schmidhuber1991possibility}, the intrinsic objective encourages the agent to explore novel states by rewarding prediction errors \cite{stadie2015incentivizing,pathak2017curiosity,burda2018large,burda2018exploration}.
On a similar flavor, other works propose to relate an intrinsic reward to some sort of learning progress~\cite{lopes2012exploration} or information gain~\cite{mohamed2015variational,houthooft2016vime}, stimulating the agent's empowerment over the environment. 
Count-based approaches~\cite{bellemare2016unifying,tang2017exploration,ostrovski2017count} consider exploration bonuses proportional to the state visitation frequencies, assigning high rewards to rarely visited states.
Athough the mentioned approaches have been relatively effective in solving sparse-rewards, hard-exploration tasks~\cite{pathak2017curiosity,burda2018exploration}, they have some common limitations that may affect their ability to methodically explore an environment in the absence of external rewards, as pointed out in~\cite{ecoffet2019go}. Especially, due to the consumable nature of their intrinsic bonuses, the learning agent could prematurely lose interest in a frontier of high rewards (\emph{detachment}). Furthermore, the agent may suffer from \emph{derailment} by trying to return to a promising state, previously discovered, if a na\"ive exploratory mechanism, such as $\epsilon$-greedy, is combined to the intrinsic motivation mechanism (which is often the case). 
To overcome these limitations, recent works suggest alternative approaches to motivate the agent towards a more systematic exploration of the environment~\cite{hazan2018provably,ecoffet2019go}. Especially, in~\cite{hazan2018provably} the authors consider an intrinsic objective which is directed to the maximization of an entropic measure over the state distribution induced by a policy. Then, they provide a provably efficient algorithm to learn a mixture of deterministic policies that is overall optimal \wrt the maximum-entropy exploration objective.
To the best of our knowledge, none of the mentioned approaches explicitly address the related aspect of the mixing time of an exploratory policy, which represents the time it takes for the policy to reach its full capacity in terms of exploration. Nonetheless, in many cases we would like to maximize the probability of reaching any potential target state having a fairly limited number of interactions at hand for exploring the environment. Notably, this context presents some analogies to the problem of maximizing the efficiency of a random walk~\cite{hassibi2014optimized}.

In this paper, we present a novel approach to learn exploratory policies that are, at the same time, highly exploring and fast mixing. 
In Section~\ref{sec:objective}, we propose a surrogate objective to address the problem of maximum-entropy exploration over both the state space (Section~\ref{sec:highly_exploring_state}) and the action space (Section~\ref{sec:highly_exploring_action}). The idea is to search for a policy that maximizes a lower bound to the entropy of the induced steady-state distribution. We introduce three new lower bounds and the corresponding optimization problems, discussing their pros and cons.
Furthermore, we discuss how to complement the introduced objective to account for the mixing time of the learned policy (Section~\ref{sec:fast_mixing}).
In Section~\ref{sec:algorithm}, we present the \emph{Intrinsically-Driven Effective and Efficient Exploration ALgorithm} (IDE$^3$AL), a novel, model-based, reinforcement learning method to learn highly exploring and fast mixing policies through iterative optimizations of the introduced objective.
In Section~\ref{sec:experiments}, we provide an empirical evaluation to illustrate the merits of our approach on hard-exploration, finite domains, and to show how it fares in comparison to count-based and maximum-entropy approaches.
Finally, in Section~\ref{sec:discussion}, we discuss the proposed approach and related works. The proofs of the Theorems are reported in Appendix~\ref{apx:proof}\footnote{A complete version of the paper, which includes the Appendix, is available at https://arxiv.org/abs/1907.04662}.

\section{Preliminaries}
A discrete-time Markov Decision Process (MDP)~\cite{puterman2014markov} is defined as a tuple $\mathcal{M} = (\mathcal{S}, \mathcal{A}, P, R, d_0 )$, where $\mathcal{S}$ is the state space, $\mathcal{A}$ is the action space, $P(s' | s,a)$ is a Markovian transition model defining the distribution of the next state $s'$ given the current state $s$ and action $a$, $R$ is the reward function, such that $R(s,a)$ is the expected immediate reward when taking action $a$ from state $s$, and $d_0$ is the initial state distribution. A policy $\pi(a | s)$ defines the probability of taking an action $a$ in state $s$. 

In the following we will indifferently turn to scalar or matrix notation, where $\bm{v}$ denotes a vector, $\bm{M}$ denotes a matrix, and $\bm{v}^T$, $\bm{M}^T$ denote their transpose. A matrix is row (column) stochastic if it has non-negative entries and all of its rows (columns) sum to one. A matrix is \emph{doubly stochastic} if it is both row and column stochastic. We denote with $\mathbb{P}$ the space of doubly stochastic matrices. The $L_\infty$-norm $\Norm{\bm{M}}$ of a matrix is its maximum absolute row sum, while $\Norm[2]{\bm{M}} =  \big( \max \text{eig} \; \bm{M}^T \bm{M} \big)^{\frac{1}{2}}$ and $\Norm[F]{\bm{M}} = \big( \sum_i \sum_j (\bm{M} (i,j))^2 \big)^{\frac{1}{2}}$ are its $L_2$ and Frobenius norms respectively.
We denote with $\bm{1}_n$ a column vector of $n$ ones and with $\bm{1}_{n\times m}$ a matrix of ones with $n$ rows and $m$ columns.
Using matrix notation, $\bm{d}_0$ is a column vector of size $|\mathcal{S}|$ having elements $d_0 (s)$, $\bm{P}$ is a row stochastic matrix of size $(|\mathcal{S}| |\mathcal{A}| \times |\mathcal{S}|)$ that describes the transition model $\bm{P} ((s,a), s') = P(s' | s, a)$, $\bm{\Pi}$ is a row stochastic matrix of size $(|\mathcal{S}| \times |\mathcal{S}| |\mathcal{A}| )$ that contains the policy $\bm{\Pi} (s, (s,a)) = \pi(a | s) $, and $\bm{P}^\pi = \bm{\Pi} \bm{P}$ is a row stochastic matrix of size ($|\mathcal{S}| \times |\mathcal{S}|$) that represents the state transition matrix under policy $\pi$. We denote with $\Pi$ the space of all the stationary Markovian policies.

In the absence of any reward, \ie when $R(s, a) = 0$ for every $(s,a)$, a policy $\pi$ induces, over the MDP $\mathcal{M}$, a Markov Chain (MC)~\cite{levin2017markov} defined by $\mathcal{C} = (\mathcal{S}, P^\pi, d_0)$ where $P^\pi (s' | s) = \bm{P}^\pi (s, s')$ is the state transition model. Having defined the $t$-step transition matrix as $\bm{P}^\pi_t = (\bm{P}^\pi)^t$, the state distribution of the MC at time step $t$ is $\bm{d}_{t}^{\pi} = (\bm{P}^\pi_t)^T \bm{d}_0$, while $\bm{d}^\pi = \lim_{t \rightarrow \infty} \bm{d}^\pi_t$ is the steady state distribution. If the MC is ergodic, \ie aperiodic and recurrent, it admits a unique steady-state distribution, such that $\bm{d}^\pi = (\bm{P}^\pi)^T \bm{d}^\pi$. The mixing time $t_{\text{mix}}$ of the MC describes how fast the state distribution converges to the steady state:
\begin{equation}
	t_{\text{mix}} = \min \big\lbrace t \in \mathbb{N} : \sup\nolimits_{\bm{d}_0} \Norm{\bm{d}^\pi_t - \bm{d}^\pi} \leq \epsilon \big\rbrace,
\end{equation}
where $\epsilon$ is the mixing threshold. 
An MC is reversible if the condition $\bm{P}^\pi  \bm{d}^\pi = (\bm{P}^\pi)^T \bm{d}^\pi$ holds. Let $\bm{\lambda}_\pi $ be the eigenvalues of $\bm{P}^\pi$. For ergodic reversible MCs the largest eigenvalue is 1 with multiplicity 1. Then, we can define the second largest eigenvalue modulus $\lambda_\pi (2)$ and the spectral gap $\gamma_\pi$ as:
\begin{align} \label{def:spectral-gap}
	 &\lambda_\pi (2) = \max_{\lambda_{\pi} (i) \neq 1} |\lambda_\pi (i)|,  & \gamma_\pi = 1 - \lambda_\pi (2).
\end{align}

\section{Optimization Problems for Highly Exploring and Fast Mixing Policies}
\label{sec:objective}
In this section, we define a set of optimization problems whose goal is to identify a stationary Markovian policy that effectively explores the state-action space. The optimization problem is introduced in three steps: first we ask for a policy that maximizes some lower bound to the steady-state distribution entropy, then we foster exploration over the action space by adding a constraint on the minimum action probability, and finally we add another constraint to reduce the mixing time of the Markov chain induced by the policy. 

\subsection{Highly Exploring Policies over the State Space}
\label{sec:highly_exploring_state}
Intuitively, a good exploration policy should guarantee to visit the state space as uniformly as possible.
In this view, a potential objective function is the entropy of the steady-state distribution induced by a policy over the MDP~\cite{hazan2018provably}. The resulting optimal policy is:
\begin{equation} \label{eq:maximum_entropy}
	\pi^* \in \argmax_{\pi \in \Pi} H(\bm{d}^\pi),
\end{equation}
where $H(\bm{d}^\pi) = - \EV_{s \sim d^\pi}  \big[ \log d^\pi (s) \big]$ is the state distribution entropy. Unfortunately, a direct optimization of this objective is particularly arduous since the steady-state distribution entropy is not a concave function of the policy~\cite{hazan2018provably}. To overcome this issue, a possible solution~\cite{hazan2018provably} is to use the conditional gradient method, such that the gradients of the steady-state distribution entropy become the intrinsic reward in a sequence of approximate dynamic programming problems~\cite{bertsekas1995dynamic}.

In this paper, we follow an alternative route that consists in maximizing a lower bound to the policy entropy. In particular, in the following we will consider three lower bounds that lead to as many optimization problems (named Infinity, Frobenius, Column Sum) that show different trade-offs between theoretical guarantees and computational complexity.

\textbf{Infinity}~~From the theory of Markov chains~\cite{levin2017markov}, we know a necessary and sufficient condition for a policy to induce a uniform steady-state distribution (i.e., to achieve the maximum possible entropy).
We report this result in the following theorem.
\begin{restatable}[]{thr}{doublyStochastic} \label{thr:doubly_stochastic}
	Let $\bm{P}$ be the transition matrix of a given MDP. The steady-state distribution $\bm{d}^\pi$ induced by a policy $\pi$ is uniform over $\mathcal{S}$ iff the matrix $\bm{P}^\pi = \bm{\Pi} \bm{P}$ is doubly stochastic.
\end{restatable}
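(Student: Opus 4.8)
The plan is to exploit the algebraic characterization of the steady-state distribution as a fixed point of the transposed transition matrix, together with the fact that $\bm{P}^\pi$ is row stochastic by construction. Write $n = |\mathcal{S}|$. Since $\bm{\Pi}$ and $\bm{P}$ are both row stochastic, their product $\bm{P}^\pi = \bm{\Pi}\bm{P}$ is row stochastic, i.e., $\bm{P}^\pi \bm{1}_n = \bm{1}_n$; this is one half of double stochasticity, and it holds for free. Recall also that the steady-state distribution is the (unique, for an ergodic chain) probability vector satisfying $\bm{d}^\pi = (\bm{P}^\pi)^T \bm{d}^\pi$.

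For the forward direction, I would assume $\bm{d}^\pi = \frac{1}{n}\bm{1}_n$. Substituting into the fixed-point equation gives $\frac{1}{n}\bm{1}_n = (\bm{P}^\pi)^T \frac{1}{n}\bm{1}_n$, hence $(\bm{P}^\pi)^T \bm{1}_n = \bm{1}_n$, which says precisely that every column of $\bm{P}^\pi$ sums to one. As $\bm{P}^\pi$ already has non-negative entries and unit row sums, it is column stochastic as well, therefore doubly stochastic.

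For the converse, I would assume $\bm{P}^\pi$ is doubly stochastic, so that $(\bm{P}^\pi)^T \bm{1}_n = \bm{1}_n$. Then the uniform vector $\frac{1}{n}\bm{1}_n$ is a probability vector satisfying $\frac{1}{n}\bm{1}_n = (\bm{P}^\pi)^T \frac{1}{n}\bm{1}_n$, i.e., it is a stationary distribution of the induced Markov chain; by uniqueness of the steady-state distribution, $\bm{d}^\pi = \frac{1}{n}\bm{1}_n$ is uniform over $\mathcal{S}$.

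The only delicate point is the appeal to existence and uniqueness of the stationary distribution: the phrasing ``the steady-state distribution $\bm{d}^\pi$ induced by $\pi$'' already presupposes that the induced Markov chain is ergodic, so I would carry this as a standing hypothesis. Under it, the equivalence follows immediately from the two short computations above; without it, the converse argument would only yield that the uniform distribution is \emph{a} stationary distribution rather than the limiting one, so ergodicity is exactly what makes the ``iff'' tight.
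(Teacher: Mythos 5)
Your proof is correct, and the forward direction coincides with the paper's: substituting the uniform vector into the stationary equation immediately yields unit column sums, and row stochasticity of $\bm{P}^\pi = \bm{\Pi}\bm{P}$ comes for free. Where you genuinely diverge is the converse. You observe that double stochasticity makes the uniform vector a stationary distribution and then invoke uniqueness of the stationary distribution of an ergodic chain to conclude $\bm{d}^\pi$ is uniform. The paper instead argues by contradiction: it perturbs the uniform distribution, notes that since the columns of $\bm{P}^\pi$ sum to one each entry $d^\pi(s_h) = \sum_{s'} P^\pi(s_h\mid s')\,d^\pi(s')$ is a convex combination of the entries of $\bm{d}^\pi$, deduces that a strict maximizer $s_h$ would force $P^\pi(s_h\mid s_h)=1$, and observes that such a self-loop is incompatible with a stationary mass strictly between $0$ and $1$. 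Your route is shorter and, frankly, tighter: the paper's argument examines only a specific two-state perturbation rather than an arbitrary non-uniform candidate, whereas your appeal to uniqueness disposes of all alternatives at once. The price is that you lean explicitly on ergodicity; but you are right that this is already a standing hypothesis in the paper (the preliminaries state that an ergodic chain admits a unique steady-state distribution, and the very phrase ``the steady-state distribution induced by $\pi$'' presupposes it), and the paper's own contradiction argument implicitly needs the same well-posedness, so nothing is lost. Your remark that without ergodicity the converse only identifies the uniform distribution as \emph{a} stationary distribution is exactly the right caveat.
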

Unfortunately, given the constraints specified by the transition matrix $\bm{P}$, a stationary Markovian policy that induces a doubly stochastic $\bm{P}^\pi$ may not exist.
On the other hand, it is possible to lower bound the entropy of the steady-state distribution induced by policy $\pi$ as a function of the minimum $L_\infty$-norm between $\bm{P}^\pi$ and any doubly stochastic matrix.
\begin{restatable}[]{thr}{entropyBound} \label{thr:entropy_bound}
	Let $\bm{P}$ be the transition matrix of a given MDP and $\mathbb{P}$ the space of doubly stochastic matrices. The entropy of the steady-state distribution $\bm{d}^\pi$ induced by a policy $\pi$ is lower bounded by:
	\begin{equation*}
	 H(\bm{d}^\pi) \geq \log|\mathcal{S}| - |\mathcal{S}|\inf_{\bm{P}^u\in\mathbb{P}}\Norm[\infty]{\bm{P}^u - \bm{\Pi} \bm{P}}^2.
	\end{equation*}
\end{restatable}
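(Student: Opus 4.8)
The plan is to pass from the entropy to a divergence, from the divergence to a quadratic (chi-squared--type) distance, and finally from that distance to the matrix distance that appears in the statement. Write $\bm{u} = \tfrac{1}{|\mathcal{S}|}\bm{1}$ for the uniform distribution. Since $\sum_s d^\pi(s)=1$, we have $\log|\mathcal{S}| - H(\bm{d}^\pi) = \sum_s d^\pi(s)\log\!\big(|\mathcal{S}|\,d^\pi(s)\big)$, which is exactly $D_{\mathrm{KL}}(\bm{d}^\pi \,\|\, \bm{u})$. Applying $\log x \le x-1$ termwise gives $\log|\mathcal{S}| - H(\bm{d}^\pi) \le \sum_s d^\pi(s)\big(|\mathcal{S}|\,d^\pi(s)-1\big) = |\mathcal{S}|\,\Norm[2]{\bm{d}^\pi}^2 - 1$, and because $\bm{d}^\pi - \bm{u}$ is orthogonal to $\bm{1}$ we get $\Norm[2]{\bm{d}^\pi}^2 = \tfrac{1}{|\mathcal{S}|} + \Norm[2]{\bm{d}^\pi - \bm{u}}^2$, hence $\log|\mathcal{S}| - H(\bm{d}^\pi) \le |\mathcal{S}|\,\Norm[2]{\bm{d}^\pi - \bm{u}}^2$. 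So it is enough to prove $\Norm[2]{\bm{d}^\pi - \bm{u}} \le \Norm[\infty]{\bm{P}^u - \bm{\Pi}\bm{P}}$ for an arbitrary $\bm{P}^u\in\mathbb{P}$ and then take the infimum over $\bm{P}^u$.

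For the remaining inequality I would set up a stationary-distribution perturbation argument. Fix $\bm{P}^u\in\mathbb{P}$; by Theorem~\ref{thr:doubly_stochastic} its stationary distribution is $\bm{u}$, i.e. $\bm{u} = (\bm{P}^u)^T\bm{u}$, while $\bm{d}^\pi = (\bm{P}^\pi)^T\bm{d}^\pi$ with $\bm{P}^\pi = \bm{\Pi}\bm{P}$. Subtracting the two fixed-point identities and rearranging yields a fixed-point relation for the error $\bm{e} := \bm{d}^\pi - \bm{u}$, namely $\bm{e} = (\bm{P}^u)^T\bm{e} + \bm{f}$ with $\bm{f} := (\bm{P}^\pi - \bm{P}^u)^T\bm{d}^\pi$. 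Two elementary facts make $\bm{f}$ manageable: it is mean-zero, $\bm{1}^T\bm{f} = \big((\bm{P}^\pi-\bm{P}^u)\bm{1}\big)^T\bm{d}^\pi = 0$ (both matrices are row stochastic); and $\Norm[1]{\bm{f}} \le \sum_s d^\pi(s)\,\Norm[1]{\mathrm{row}_s(\bm{P}^\pi-\bm{P}^u)} \le \Norm[\infty]{\bm{P}^\pi-\bm{P}^u}$. Solving the fixed-point relation on the mean-zero subspace $\bm{1}^\perp$ --- inverting $I-(\bm{P}^u)^T$ there, equivalently summing the Neumann series, and using that powers of $(\bm{P}^u)^T$ are $L_1$-non-expansive --- propagates the bound on $\bm{f}$ to a bound on $\bm{e}$; then $\Norm[2]{\bm{e}} \le \Norm[1]{\bm{e}}$, taking the infimum over $\bm{P}^u$, and squaring give the claim.

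The main obstacle is exactly this last propagation step: turning ``$\bm{\Pi}\bm{P}$ is $L_\infty$-close to some doubly stochastic matrix'' into ``$\bm{d}^\pi$ is close to $\bm{u}$'' is a quantitative perturbation bound for the stationary distribution, and carried out naively the Neumann series over $\bm{1}^\perp$ converges only at the chain's relaxation rate, so one would a priori pick up a factor depending on the spectral gap / fundamental-matrix norm of $\bm{P}^u$ (or of $\bm{P}^\pi$). The care therefore goes into either exploiting extra structure of the infimizing doubly stochastic matrix or controlling that factor so that it does not spoil the clean form of the stated inequality; everything else --- the entropy-to-divergence identity, $\log x \le x-1$, the mean-zero bookkeeping, and the row-sum estimate for $\bm{f}$ --- is routine.
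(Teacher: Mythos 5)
Your first half is fine and essentially matches the paper: the identity $\log|\mathcal{S}|-H(\bm{d}^\pi)=D_{\mathrm{KL}}(\bm{d}^\pi\|\bm{u})$ is exactly Equation (10) of the paper's proof, and your $\log x\le x-1$ (chi-squared) step even yields $|\mathcal{S}|\Norm[2]{\bm{d}^\pi-\bm{u}}^2$, which is no worse than the paper's reverse-Pinsker bound $|\mathcal{S}|\Norm[1]{\bm{d}^u-\bm{d}^\pi}^2$. The problem is that the entire difficulty of the theorem lives in the step you leave open, and you say so yourself: propagating $\Norm[1]{\bm{f}}\le\Norm[\infty]{\bm{P}^\pi-\bm{P}^u}$ through $\bm{e}=(\bm{P}^u)^T\bm{e}+\bm{f}$ by a Neumann series on $\bm{1}^\perp$ only gives $\Norm[1]{\bm{e}}\le\sum_{k\ge 0}\Norm[1]{((\bm{P}^u)^T)^k\bm{f}}$, and $L_1$-non-expansiveness of a doubly stochastic matrix bounds each term by $\Norm[1]{\bm{f}}$ but does not make the series converge, let alone sum to $\Norm[1]{\bm{f}}$. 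Carried out as written you get a constant of the form $(1-\lambda_2(\bm{P}^u))^{-1}$ or $\Norm[\infty]{\bm{Z}}$, which can be arbitrarily large for a general doubly stochastic $\bm{P}^u$ (e.g.\ one close to a permutation matrix), and this would destroy the clean $|\mathcal{S}|$ constant in the statement. Identifying an obstacle is not the same as overcoming it, so this is a genuine gap.

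The ingredient the paper uses to close exactly this gap is twofold: first, Schweitzer's perturbation bound $\Norm[1]{\bm{d}^u-\bm{d}^\pi}\le\Norm[\infty]{\bm{Z}}\,\Norm[\infty]{\bm{P}^u-\bm{\Pi}\bm{P}}$, where $\bm{Z}=\left(\bm{I}-\bm{P}^u+\tfrac{1}{|\mathcal{S}|}\bm{1}_{|\mathcal{S}|\times|\mathcal{S}|}\right)^{-1}$ is the fundamental matrix of the \emph{unperturbed} chain $\bm{P}^u$ (which packages your Neumann-series inversion on $\bm{1}^\perp$ in closed form); and second, the cited fact (Hunter 2010) that the fundamental matrix associated with a doubly stochastic transition matrix is itself row stochastic, whence $\Norm[\infty]{\bm{Z}}=1$. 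It is precisely this structural property of doubly stochastic matrices --- not of the infimizing one in particular, but of every $\bm{P}^u\in\mathbb{P}$ --- that removes the spectral-gap dependence you worry about. Without supplying this fact (or an equivalent argument showing the relevant resolvent has unit $L_\infty$ norm on the mean-zero subspace), your proof does not establish the stated inequality.
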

The maximization of this lower bound leads to the following constrained optimization problem:
\begin{equation} \label{eq:objective}
	\begin{aligned}
		& \underset{\bm{P}^u\in\mathbb{P}, \bm{\Pi}\in\Pi}{\text{minimize}}
		& & \Norm[\infty]{\bm{P}^u - \bm{\Pi} \bm{P}} 
    \end{aligned}
\end{equation}
It is worth noting that this optimization problem can be reformulated as a linear program with $|\mathcal{S}|^2+|\mathcal{S}||\mathcal{A}|+|\mathcal{S}|$ optimization variables and $2^{|\mathcal{S}|}|\mathcal{S}|+|\mathcal{S}|^2+|\mathcal{S}||\mathcal{A}|$ inequality constraints and $3|\mathcal{S}|$ equality constraints (the linear program formulation can be found in Appendix~\ref{app:lp_infinity}). In order to avoid the exponential growth of the number of constraints as a function of the number of states, we are going to introduce alternative optimization problems.

\textbf{Frobenius}~~It is worth noting that different transition matrices $\bm{P}^\pi$ having equal $\Norm[\infty]{ \bm{P}^u - \bm{P}^\pi }$ might lead to significantly different state distribution entropies $H(\bm{d}^\pi)$, as the $L_\infty$-norm only accounts for the state corresponding to the maximum absolute row sum. The Frobenius norm can better captures the distance between $\bm{P}^u$ and $\bm{P}^\pi$ over all the states, as discussed in Appendix~\ref{apx:three_states}. For this reason, we have derived a lower bound to the policy entropy that replace the $L_\infty$-norm with the Frobenius one.
\begin{restatable}[]{thr}{entropyBoundF} \label{thr:entropy_bound_f}
	Let $\bm{P}$ be the transition matrix of a given MDP and $\mathbb{P}$ the space of doubly stochastic matrices. The entropy of the steady-state distribution $\bm{d}^\pi$ induced by a policy $\pi$ is lower bounded by:
	\begin{equation*}
	 H(\bm{d}^\pi) \geq \log|\mathcal{S}| - |\mathcal{S}|^2\inf_{\bm{P}^u\in\mathbb{P}}\Norm[F]{\bm{P}^u - \bm{\Pi} \bm{P}}^2.
	\end{equation*}
\end{restatable}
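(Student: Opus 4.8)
The plan is to obtain Theorem~\ref{thr:entropy_bound_f} as an immediate consequence of Theorem~\ref{thr:entropy_bound}, by comparing the $L_\infty$ and Frobenius norms on the space of $|\mathcal{S}|\times|\mathcal{S}|$ matrices to which $\bm{P}^u-\bm{\Pi}\bm{P}$ belongs. The only quantitative ingredient is the elementary bound $\Norm[\infty]{\bm{M}}^2\leq|\mathcal{S}|\,\Norm[F]{\bm{M}}^2$ for any such matrix $\bm{M}$; once it is in hand, substituting it into the lower bound of Theorem~\ref{thr:entropy_bound} converts the multiplicative factor $|\mathcal{S}|$ in front of the $L_\infty$ term into a factor $|\mathcal{S}|^2$ in front of the Frobenius term, which is exactly the claimed inequality.

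First I would prove the norm comparison. Set $\bm{M}=\bm{P}^u-\bm{\Pi}\bm{P}$ and fix a row index $i$. By Cauchy--Schwarz, $\sum_{j}|\bm{M}(i,j)|\leq\sqrt{|\mathcal{S}|}\,\big(\sum_{j}\bm{M}(i,j)^2\big)^{1/2}\leq\sqrt{|\mathcal{S}|}\,\Norm[F]{\bm{M}}$, where the last inequality holds because $\Norm[F]{\bm{M}}^2=\sum_{i}\sum_{j}\bm{M}(i,j)^2$ dominates the squared $L_2$ norm of any single row. Maximizing the left-hand side over $i$ gives $\Norm[\infty]{\bm{M}}\leq\sqrt{|\mathcal{S}|}\,\Norm[F]{\bm{M}}$, hence $\Norm[\infty]{\bm{M}}^2\leq|\mathcal{S}|\,\Norm[F]{\bm{M}}^2$. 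This holds for every doubly stochastic $\bm{P}^u\in\mathbb{P}$, so taking infima (monotone, with the positive constant $|\mathcal{S}|$ pulled out) yields $\inf_{\bm{P}^u\in\mathbb{P}}\Norm[\infty]{\bm{P}^u-\bm{\Pi}\bm{P}}^2\leq|\mathcal{S}|\inf_{\bm{P}^u\in\mathbb{P}}\Norm[F]{\bm{P}^u-\bm{\Pi}\bm{P}}^2$. Chaining this with Theorem~\ref{thr:entropy_bound},
\begin{gather*}
  H(\bm{d}^\pi)\;\geq\;\log|\mathcal{S}|-|\mathcal{S}|\inf_{\bm{P}^u\in\mathbb{P}}\Norm[\infty]{\bm{P}^u-\bm{\Pi}\bm{P}}^2 \\
  \;\geq\;\log|\mathcal{S}|-|\mathcal{S}|^2\inf_{\bm{P}^u\in\mathbb{P}}\Norm[F]{\bm{P}^u-\bm{\Pi}\bm{P}}^2 ,
\end{gather*}
which is the statement of Theorem~\ref{thr:entropy_bound_f}.

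I do not anticipate a genuine obstacle on this route: the whole argument collapses to the one-line rowwise Cauchy--Schwarz estimate, and the weaker constant $|\mathcal{S}|^2$ (versus $|\mathcal{S}|$ for the $L_\infty$ bound) is precisely the price of replacing the $L_\infty$ norm by the Frobenius norm, matching the paper's ``theoretical guarantees vs.\ computational complexity'' trade-off. The real work would reappear only in a self-contained proof that avoids invoking Theorem~\ref{thr:entropy_bound}: one would lower bound $H(\bm{d}^\pi)$ by $\log|\mathcal{S}|$ minus a multiple of the squared distance of $\bm{d}^\pi$ from the uniform distribution (\eg through a relative-entropy/$\chi^2$ comparison against the uniform law), then control that distance by $\Norm[F]{\bm{P}^u-\bm{\Pi}\bm{P}}$ using that the uniform law is stationary for the doubly stochastic $\bm{P}^u$ (Theorem~\ref{thr:doubly_stochastic}); executing this stationary-distribution perturbation step in the Frobenius norm, with the stated constant, is the delicate part.
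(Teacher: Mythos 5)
Your proposal is correct and follows essentially the same route as the paper's proof: deduce the result from Theorem~\ref{thr:entropy_bound} via the norm comparison $\Norm[\infty]{\bm{M}}\leq\sqrt{|\mathcal{S}|}\,\Norm[F]{\bm{M}}$ applied to $\bm{M}=\bm{P}^u-\bm{\Pi}\bm{P}$ and then pass to the infimum over $\mathbb{P}$. The only difference is cosmetic: you derive the norm inequality from a rowwise Cauchy--Schwarz argument, whereas the paper cites it as a standard property of matrix norms (and, incidentally, states it there with the inequality sign reversed by an apparent typo, though it is used in the correct direction).
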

It can be shown (see Corollary~\ref{cor:frobenius} in Appendix~\ref{apx:proof}) that the lower bound based on the Frobenius norm cannot be better (i.e., larger) than the one with the Infinite norm. However, we have the advantage that the resulting optimization problem has significantly less constraints than Problem~\eqref{eq:objective}:
\begin{equation} \label{eq:objective_f}
	\begin{aligned}
		& \underset{\bm{P}^u\in\mathbb{P}, \bm{\Pi}\in\Pi}{\text{minimize}}
		& & \Norm[F]{\bm{P}^u - \bm{\Pi} \bm{P}}.
    \end{aligned}
\end{equation}
This problem is a (linearly constrained) quadratic problem with $|\mathcal{S}|^2+|\mathcal{S}||\mathcal{A}|$ optimization variables and $|\mathcal{S}|^2+|\mathcal{S}||\mathcal{A}|$ inequality constraints and $3|\mathcal{S}|$ equality constraints.

\textbf{Column Sum}~~Problems~\eqref{eq:objective} and~\eqref{eq:objective_f} are aiming at finding a policy associated with a state transition matrix that is doubly stochastic. To achieve this result it is enough to guarantee that the column sums of the matrix $\bm{P}^\pi$ are all equal to one~\cite{kirkland2010column}. A measure that can be used to evaluate the distance to a doubly stochastic matrix can be the absolute sum of the difference between one and the column sums: $\sum_{s\in\mathcal{S}}|1-\sum_{s'\in\mathcal{S}}P^\pi(s|s')|=\Norm[1]{\left(\bm{I}-(\bm{\Pi P)^T}\right)\cdot \bm{1}_{|\mathcal{S}|}}$. The following theorem provides a lower bound to the policy entropy as a function of this measure.
\begin{restatable}[]{thr}{entropyBoundCS} \label{thr:entropy_bound_cs}
	Let $\bm{P}$ be the transition matrix of a given MDP. The entropy of the steady-state distribution $\bm{d}^\pi$ induced by a policy $\pi$ is lower bounded by:
	\begin{equation*}
	 H(\bm{d}^\pi) \geq \log|\mathcal{S}| - |\mathcal{S}|\Norm[1]{\left(\bm{I}-(\bm{\Pi P)^T}\right)\cdot \bm{1}_{|\mathcal{S}|}}^2.
	\end{equation*}
\end{restatable}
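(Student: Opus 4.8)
The plan is to mirror the structure underlying Theorems~\ref{thr:entropy_bound} and~\ref{thr:entropy_bound_f}: first reduce the entropy lower bound to a bound on the distance between the steady-state distribution $\bm{d}^\pi$ and the uniform one, then control that distance by the column-sum defect $\bm{c} := \big(\bm{I} - (\bm{\Pi}\bm{P})^T\big)\bm{1}_{|\mathcal{S}|}$. For the reduction, let $\bm{u} = \tfrac{1}{|\mathcal{S}|}\bm{1}_{|\mathcal{S}|}$ and use $H(\bm{d}^\pi) = \log|\mathcal{S}| - D_{\mathrm{KL}}(\bm{d}^\pi \,\|\, \bm{u})$. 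Bounding $\log x \le x-1$ inside the divergence gives $D_{\mathrm{KL}}(\bm{d}^\pi\|\bm{u}) = \sum_{s} d^\pi(s)\log\big(|\mathcal{S}|\,d^\pi(s)\big) \le |\mathcal{S}|\sum_s d^\pi(s)^2 - 1 = |\mathcal{S}|\,\Norm[2]{\bm{d}^\pi - \bm{u}}^2$, and $\Norm[2]{\cdot}\le\Norm[1]{\cdot}$ then yields $H(\bm{d}^\pi)\ge\log|\mathcal{S}| - |\mathcal{S}|\,\Norm[1]{\bm{d}^\pi - \bm{u}}^2$. It therefore suffices to prove $\Norm[1]{\bm{d}^\pi - \bm{u}} \le \Norm[1]{\bm{c}}$.

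For this remaining inequality the lever is stationarity, $(\bm{P}^\pi)^T\bm{d}^\pi = \bm{d}^\pi$ with $\bm{P}^\pi = \bm{\Pi}\bm{P}$. Since $(\bm{P}^\pi)^T\bm{u} = \tfrac{1}{|\mathcal{S}|}(\bm{P}^\pi)^T\bm{1}_{|\mathcal{S}|} = \bm{u} - \tfrac{1}{|\mathcal{S}|}\bm{c}$, subtracting yields the key identity $\big(\bm{I} - (\bm{P}^\pi)^T\big)(\bm{d}^\pi - \bm{u}) = -\tfrac{1}{|\mathcal{S}|}\bm{c}$, so the deviation $\bm{d}^\pi - \bm{u}$ is a stationary residual driven purely by the column-sum defect. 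I would then iterate $\bm{d}^\pi - \bm{u} = (\bm{P}^\pi)^T(\bm{d}^\pi - \bm{u}) - \tfrac{1}{|\mathcal{S}|}\bm{c}$ and take $\ell_1$ norms, using that $(\bm{P}^\pi)^T$ is column stochastic (because $\bm{P}^\pi$ is row stochastic), hence an $\ell_1$ non-expansion, and that both $\bm{c}$ and $\bm{d}^\pi - \bm{u}$ lie in the mean-zero subspace $\{\bm{x} : \bm{1}^T\bm{x} = 0\}$, which is $(\bm{P}^\pi)^T$-invariant and on which $\big((\bm{P}^\pi)^T\big)^k \to 0$ by ergodicity; the goal is to collapse the telescoped copies of $\bm{c}$ into a single $\Norm[1]{\bm{c}}$ that the remaining $|\mathcal{S}|$ factor can absorb.

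The step I expect to be the main obstacle is precisely this last $\ell_1$ accounting. The operator $\bm{I} - (\bm{P}^\pi)^T$ is singular (its null space is $\mathrm{span}(\bm{d}^\pi)$), so it cannot simply be inverted, and the formal series $\sum_{k\ge 0}\big((\bm{P}^\pi)^T\big)^k\bm{c}$, while convergent on the mean-zero subspace, converges at a rate set by the mixing time of the chain; a naive term-by-term $\ell_1$ bound through the non-expansion is too lossy, and a sharper one threatens to replace the clean constant $|\mathcal{S}|$ by a spectral-gap-dependent one. Closing the telescoping with a bounded, mixing-free constant is the delicate point, and the argument must lean on the exact stationarity of $\bm{d}^\pi - \bm{u}$ and on $\bm{1}^T\bm{c}=0$ (which removes any $\bm{1}$-component of $\bm{c}$) rather than on convergence of an infinite geometric series; everything else is bookkeeping with elementary norm inequalities.
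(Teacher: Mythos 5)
Your reduction of the entropy bound to a bound on $\Norm[1]{\bm{d}^\pi-\bm{u}}$ (with $\bm{u}$ the uniform distribution) is correct: $H(\bm{d}^\pi)=\log|\mathcal{S}|-D_{KL}(\bm{d}^\pi\|\bm{u})$ and $\log x\le x-1$ give $D_{KL}(\bm{d}^\pi\|\bm{u})\le|\mathcal{S}|\Norm[2]{\bm{d}^\pi-\bm{u}}^2\le|\mathcal{S}|\Norm[1]{\bm{d}^\pi-\bm{u}}^2$, and your identity $\big(\bm{I}-(\bm{P}^\pi)^T\big)(\bm{d}^\pi-\bm{u})=-\tfrac{1}{|\mathcal{S}|}\bm{c}$ is also right. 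But the inequality you then need, $\Norm[1]{\bm{d}^\pi-\bm{u}}\le\Norm[1]{\bm{c}}$, is false, so the step you flag as ``delicate'' is not a bookkeeping issue but a genuine dead end. Take $\mathcal{S}=\{1,2\}$ and $\bm{P}^\pi=\left(\begin{smallmatrix}1-\epsilon&\epsilon\\2\epsilon&1-2\epsilon\end{smallmatrix}\right)$: the stationary distribution is $(2/3,1/3)$ for every $\epsilon\in(0,1/2)$, so $\Norm[1]{\bm{d}^\pi-\bm{u}}=1/3$, while the column sums are $1\pm\epsilon$, so $\Norm[1]{\bm{c}}=2\epsilon\to 0$. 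Resumming your series $\sum_{k\ge0}\big((\bm{P}^\pi)^T\big)^k\bm{c}$ on the mean-zero subspace produces exactly the factor $1/\gamma_\pi$ (here $1/(3\epsilon)$), i.e., the inverse spectral gap of $\bm{P}^\pi$; your suspicion that no mixing-free constant can be extracted along this route is correct.

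The paper's own proof takes a different route: it builds from $\bm{c}$ an explicit doubly stochastic matrix $\widehat{\bm{P}}$ with $\Norm[\infty]{\widehat{\bm{P}}-\bm{P}^\pi}\le\Norm[1]{\bm{c}}$ and then invokes Theorem~\ref{thr:entropy_bound}, whose proof controls $\Norm[1]{\bm{d}^u-\bm{d}^\pi}$ via the perturbation bound $\Norm[1]{\bm{d}^u-\bm{d}^\pi}\le\Norm[\infty]{\bm{Z}}\Norm[\infty]{\bm{P}^u-\bm{\Pi P}}$ together with the claim $\Norm[\infty]{\bm{Z}}=1$ for the fundamental matrix of a doubly stochastic chain. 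Be aware that the obstruction you identified resurfaces there: the fundamental matrix has unit row sums but can have negative entries, so its $L_\infty$ norm need not be $1$ (for $\bm{P}^u=\left(\begin{smallmatrix}0.98&0.02\\0.02&0.98\end{smallmatrix}\right)$ one gets $\bm{Z}=\left(\begin{smallmatrix}13&-12\\-12&13\end{smallmatrix}\right)$ with $\Norm[\infty]{\bm{Z}}=25$). Indeed, the two-state example above has $H(\bm{d}^\pi)\approx0.636<\log 2-2\Norm[1]{\bm{c}}^2\approx0.692$ at $\epsilon=0.01$, which contradicts the stated bound itself. So the gap in your attempt is real and cannot be closed as posed; it is pointing at a mixing-time dependence that the statement does not account for, rather than at a deficiency specific to your decomposition.
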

% Also in this case, it can be proved (see Corollary~\ref{cor:columnsum} in Appendix~\ref{apx:proof}) that the lower bound in Theorem~\ref{thr:entropy_bound} is always larger than the one in Theorem~\ref{thr:entropy_bound_cs}
The optimization of this lower bound leads to the following linear program:
\begin{equation} \label{eq:objective_cs}
	\begin{aligned}
		& \underset{\bm{\Pi}\in\Pi}{\text{minimize}}
		& & \Norm[1]{\left(\bm{I}-(\bm{\Pi P)^T}\right)\cdot \bm{1}_{|\mathcal{S}|}}.
    \end{aligned}
\end{equation}
Besides being a linear program, unlike the other optimization problems presented, Problem~\eqref{eq:objective_cs} does not require to optimize over the space of all the doubly stochastic matrices, thus significantly reducing the number of optimization variables ($|\mathcal{S}|+|\mathcal{S}||\mathcal{A}|$) and constraints ($2|\mathcal{S}|+|\mathcal{S}||\mathcal{A}|$ inequalities and $|\mathcal{S}|$ equalities). The linear program formulation of Problem~\eqref{eq:objective_cs} can be found in Appendix~\ref{app:lp_columnsum}.

\subsection{Highly Exploring Policies over the State and Action Space}
\label{sec:highly_exploring_action}

Although the policy resulting from the optimization of one of the above problems may lead to the most uniform exploration of the state space, the actual goal of the exploration phase is to collect enough information on the environment to optimize, at some point, a goal-conditioned policy~\cite{pong2019skew}.    
To this end, it is essential to have an exploratory policy that adequately covers the action space $\mathcal{A}$ in any visited state. Unfortunately, the optimization of Problems~\eqref{eq:objective}, \eqref{eq:objective_f}, \eqref{eq:objective_cs} does not guarantee even that the obtained policy is stochastic. Thus, we need to embed in the problem a secondary objective that takes into account the exploration over $\mathcal{A}$. This can be done by enforcing a minimal entropy over actions in the policy to be learned, adding to~\eqref{eq:objective}, \eqref{eq:objective_f}, \eqref{eq:objective_cs} the following constraints:
\begin{equation} \label{eq:pi_entropy}
	\pi (a | s) \geq \xi, \quad \forall s \in \mathcal{S}, \quad \forall a \in \mathcal{A},
\end{equation}
where $\xi\in [0, \frac{1}{|\mathcal{A}|}]$.
This secondary objective is actually in competition with the objective of uniform exploration over states. Indeed, an overblown incentive in the exploration over actions may limit the state distribution entropy of the optimal policy. Having a low probability of visiting a state decreases the likelihood of sampling an action from that state, hence, also reducing the exploration over actions. To illustrate that, Figure~\ref{fig:competing_objectives_xi} shows state distribution entropies ($H(\bm{d}^\pi)$) and state-action distribution entropies, \ie $H(\bm{d}^\pi \bm{\Pi})$, achieved by the optimal policy \wrt Problem~\eqref{eq:objective_f} on the Single Chain domain~\cite{furmston2010variational} for different values of $\xi$.
\begin{figure*}[t]
	\begin{minipage}[H]{.38\textwidth}
		\begin{figure}[H]
			\centering
  			\includegraphics[scale=1, valign=b]{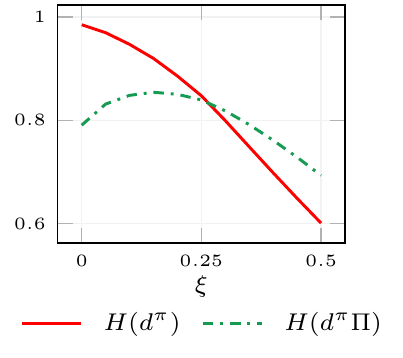}
		\end{figure}
		\caption{State distribution entropy ($H (\bm{d}^\pi)$), state-action distribution entropy ($H (\bm{d}^\pi \bm{\Pi})$) for different values of $\xi$ on the Single Chain domain.}
		\label{fig:competing_objectives_xi}
	\end{minipage}
	\hskip .034\textwidth 
	\begin{minipage}[H]{.58\textwidth}
		\begin{figure}[H]
			\includegraphics[scale=1, valign=t]{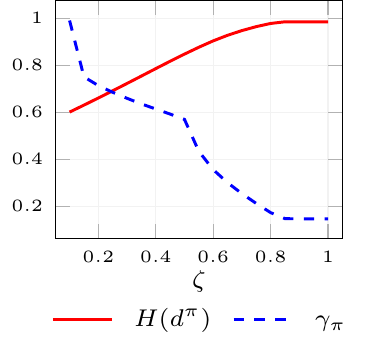}
			\hskip .01\textwidth 
			\includegraphics[scale=0.64, valign=t]{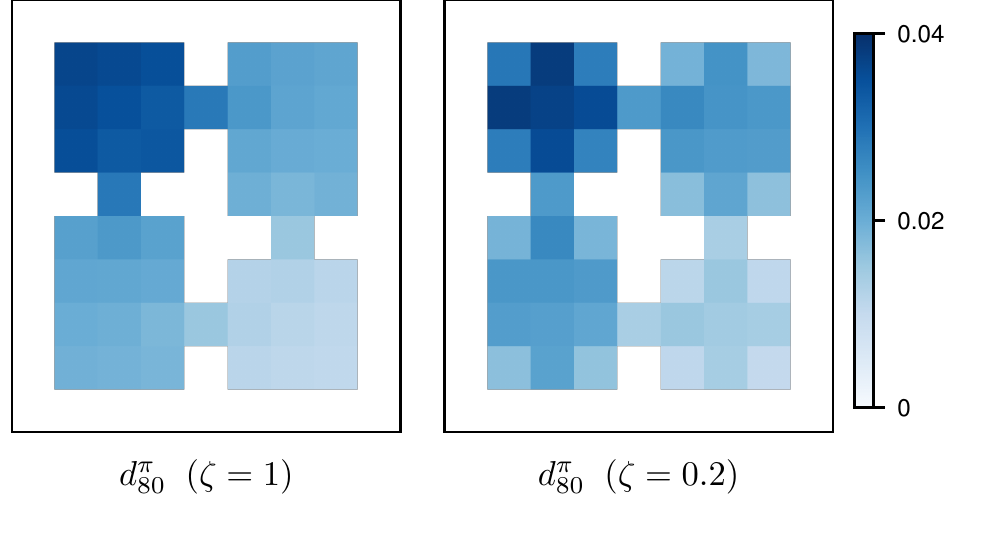}
		\end{figure}
		\caption{State distribution entropy ($H (\bm{d}^\pi)$), spectral gap ($\gamma_\pi$) for different values of $\zeta$ on the Single Chain domain (left). Color-coded state distribution overlaid on a $4$-rooms gridworld for different values of $\zeta$ (right).}
		\label{fig:competing_objectives_zeta}
	\end{minipage}
\end{figure*}
%\begin{figure*}[t]
%	\begin{minipage}[t]{.47\textwidth}
%		\centering
%  		\includegraphics[scale=1, valign=b]{plot_files/competing_objectives.pdf}
%	\end{minipage}
%	\hskip .04\textwidth 
%	\begin{minipage}[t]{.48\textwidth}
%  		\includegraphics[scale=0.675, valign=b, right]{plot_files/gridworld2.pdf}
%	\end{minipage}
%	\caption{State distribution entropy ($H (\bm{d}^\pi)$), state-action distribution entropy ($H (\bm{d}^\pi \bm{\Pi})$), and spectral gap ($\gamma_\pi$) of the optimal policy, for different values of $\xi$, $\zeta$ on the Single Chain environment~\cite{furmston2010variational} (left). Color-coded state distribution overlaid on a $4$-rooms gridworld for different values of $\zeta$ (right).}
%  	\label{fig:competing_objectives}
%\end{figure*}

\subsection{An Objective to Make Highly Exploring Policies Mix Faster}
\label{sec:fast_mixing}

In many cases, such as in episodic tasks where the horizon for exploration is capped, we may have interest in trading inferior state entropy for faster convergence of the learned policy. Although the doubly stochastic matrices are equally valid in terms of steady-state distribution, the choice of the target $\bm{P}^u$ strongly affects the mixing properties of the $\bm{P}^\pi$ induced by the policy. Indeed, while an MC with a uniform transition matrix, \ie transition probabilities $\bm{P}^u (s, s') = \frac{1}{|\mathcal{S}|}$ for any $s$, $s'$, mixes in no time, an MC with probability one on the self-loops never converges to a steady state. This is evident considering that the mixing time $t_{\text{mix}}$ of an MC is trapped as follows~\cite[Theorems 12.3 and 12.4]{levin2017markov}:
\begin{equation} \label{eq:mix_trap}
	\frac{1 - \gamma_\pi}{\gamma_\pi} \log  \frac{1}{2 \epsilon} 
	\leq t_{\text{mix}} \leq
	\frac{1}{\gamma_\pi} \log \frac{1}{d^{\pi}_{\text{min}} \epsilon},
\end{equation}
where $\epsilon$ is the mixing threshold, $d^{\pi}_{\text{min}}$ is a minorization of $\bm{d}^\pi$, and $\gamma_\pi$ is the spectral gap of $\bm{P}^\pi$~(\ref{def:spectral-gap}).
From the literature of MCs, we know that a variant of the Problems~\eqref{eq:objective}, \eqref{eq:objective_f} having the uniform transition matrix as target $\bm{P}^u$ and the $L_2$ as matrix norm, is equivalent to the problem of finding the fastest mixing transition matrix $\bm{P}^\pi$~\cite{boyd2004fastest}. However, the choice of this target may overly limit the entropy over the state distribution induced by the optimal policy. Instead, we look for a generalization that allows us to prioritize fast exploration at will. 
Thus, we consider a continuum of relaxations in the fastest mixing objective by embedding in Problems~\eqref{eq:objective} and~\eqref{eq:objective_f} (but not in Problem~\eqref{eq:objective_cs}) the following constraints:
\begin{equation}
	\bm{P}^{u} (s,s') \leq \zeta, \quad \forall s, s' \in \mathcal{S},
\end{equation}
where $\zeta\in [\frac{1}{|\mathcal{S}|}, 1]$. By setting $\zeta=\frac{1}{|\mathcal{S}|}$, we force the optimization problem to consider the uniform transition matrix as a target, thus aiming to reduce the mixing time, while larger values of $\zeta$ relax this objective, allowing us to get a higher steady-state distribution entropy. In Figure~\ref{fig:competing_objectives_zeta} we show how the parameter $\zeta$ affects the trade-off between high steady-state entropy and low mixing times (i.e., high spectral gaps), reporting the values obtained by optimal policies \wrt Problem~\eqref{eq:objective_f} for different $\zeta$.

\section{A Model-Based Algorithm for Highly Exploring and Fast Mixing Policies}
\label{sec:algorithm}

\begin{figure*}[t]
\begin{minipage}[t]{0.47 \textwidth}
	\begin{algorithm}[H]
    \caption{IDE$^{3}$AL}
    \label{alg:ideal}
        \begin{algorithmic}[H]
        	\Statex \textbf{Input}: $\xi$, $\zeta$, batch size $N$
        	\State Initialize $\pi_0$ and transition counts $C \in \mathbb{N}^{|\mathcal{S}|^2 \times |\mathcal{A}|}$
        	\For{ $i = 0, 1, 2, \ldots,$ until convergence }
        		\State Collect N steps with $\pi_i$ and update $C$
        		\State Estimate the transition model as: \\
        						$ \quad \quad \quad \hat{P}_i (s' | s, a) = 
        						\begin{cases} 
        							\frac{C(s' | s, a)}{\sum_{s'} C(s' | s, a)},&\scriptstyle{\text{if} \; C( \cdot | s, a) > 0} \\
        							\scriptstyle{1 / |\mathcal{S}|},&\scriptstyle{\text{otherwise}} \\
        						\end{cases}$
        		\State $\pi_{i+1} \gets$ optimal policy for~\eqref{eq:objective} (or~\eqref{eq:objective_f} or~\eqref{eq:objective_cs}),
        		\State given the parameters $\xi, \zeta,$ and $\hat{P}_i$
        	\EndFor
        	\Statex \textbf{Output}: exploratory policy $\pi_i$
        \end{algorithmic}
	\end{algorithm}
\end{minipage}
\hspace{0.05\linewidth}
\begin{minipage}[t]{0.465 \textwidth}
	\begin{figure}[H]
		\centering
	   		\includegraphics[width=0.59\textwidth]{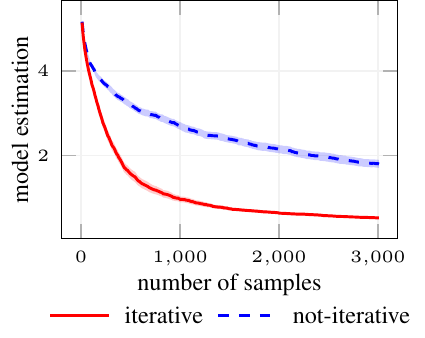}
 			\caption{Model estimation error on the Double Chain with $\xi = 0.1$, $\zeta=0.7$, $N = 10$ (100 runs, 95$\%$ c.i.).}
 			\label{fig:iterative_vs_not}
	\end{figure}
\end{minipage}
\end{figure*}
In this section, we present an approach to incrementally learn a highly exploring and fast mixing policy through interactions with an unknown environment, developing a novel model-based exploration algorithm called \emph{Intrinsically-Driven Effective and Efficient Exploration ALgorithm} (IDE$^{3}$AL). Since Problems~\eqref{eq:objective}, \eqref{eq:objective_f}, \eqref{eq:objective_cs} requires an explicit representation of the matrix $\bm{P}$, we need to estimate the transition model from samples before performing an objective optimization (model-based approach). In tabular settings, this can be easily done by adopting the transition frequency as a proxy for the (unknown) transition probabilities, obtaining an estimated transition model $\hat{P}(s' | s,a)$. However, in hard-exploration tasks, it can be arbitrarily arduous to sample transitions from the most difficult-to-reach states by relying on na\"ive exploration mechanisms, such as a random policy. To address the issue, we lean on an iterative approach in which we alternate model estimation phases with optimization sweeps of the objectives~\eqref{eq:objective}, \eqref{eq:objective_f} or \eqref{eq:objective_cs}. In this way, we combine the benefit of collecting samples with highly exploring policies to better estimate the transition model and the benefit of having a better-estimated model to learn superior exploratory policies. In order to foster the policy towards $(s, a)$ pairs that have never been sampled, we keep their corresponding distribution $\hat{P}(\cdot | s, a)$ to be uniform over all possible states, thus making the pair $(s, a)$ particularly valuable in the perspective of the optimization problem. The algorithm converges whenever the exploratory policy remains unchanged during consecutive optimization sweeps and, if we know the size of the MDP, when all state-action pairs have been sufficiently explored. In Algorithm~\ref{alg:ideal} we report the pseudo-code of IDE$^{3}$AL. Finally, in Figure~\ref{fig:iterative_vs_not} we compare the iterative formulation against a not-iterative one, \ie an approach that collects samples with a random policy and then optimizes the exploration objective off-line. Considering an exploration task on the Double Chain domain~\cite{furmston2010variational}, we show that the iterative form has a clear edge in reducing the model estimation error $\| \bm{P} - \hat{\bm{P}} \|_F$. Both the approaches employ a Frobenius formulation.

\section{Experimental Evaluation}
\label{sec:experiments}

In this section, we provide the experimental evaluation of IDE$^3$AL. First, we show a set of experiments on the illustrative \emph{Single Chain} and \emph{Double Chain} domains~\cite{furmston2010variational,peters2010relative}. The Single Chain consists of $10$ states having $2$ possible actions, one to climb up the chain from state $0$ to $9$, and the other to directly fall to the initial state $0$. The two actions are flipped with a probability $p_\textit{slip} = 0.1$, making the environment stochastic and reducing the probability of visiting the higher states. The Double Chain concatenates two Single Chain into a bigger one sharing the central state $9$, which is the initial state. Thus, the chain can be climbed in two directions. These two domains, albeit rather simple from a dimensionality standpoint, are actually hard to explore uniformly, due to the high shares of actions returning to the initial state and preventing the agent to consistently reach the higher states. Then, we present an experiment on the much more complex \emph{Knight Quest} environment~\cite[Appendix]{fruit2018efficient}, having $|\mathcal{S}| = 360$ and $|\mathcal{A}| = 8$. This domain takes inspiration from classical arcade games, in which a knight has to rescue a princess in the shortest possible time without being killed by the dragon. To accomplish this feat, the knight has to perform an intricate sequence of actions. In the absence of any reward, it is a  fairly challenging environment for exploration. On these domains, we address the task of learning the best exploratory policy in a limited number of samples. Especially, we evaluate these policies in terms of the induced state entropy $H(\bm{d}^\pi)$ and state-action entropy $H(\bm{d}^\pi \bm{\Pi})$.

We compare our approach with \emph{MaxEnt}~\cite{hazan2018provably}, the model-based algorithm to learn maximum entropy exploration that we have previously discussed in the paper, and a count-based approach inspired by the exploration bonuses of MBIE-EB~\cite{strehl2008analysis}, which we refer as \emph{CountBased} in the following. The latter shares the same structure of our algorithm, but replace the policy optimization sweeps with approximate value iterations~\cite{bertsekas1995dynamic}, where the reward for a given state is inversely proportional to the visit count of that state. It is worth noting that the results reported for the MaxEnt algorithm are related to the mixture policy $\pi_\text{mix} = (\mathcal{D}, \alpha)$, where $\mathcal{D} = (\pi_0, \ldots, \pi_{k-1})$ is a set of $k$ $\epsilon$-deterministic policies, and $\alpha \in \Delta_k$ is a probability distribution over $\mathcal{D}$. For the sake of simplicity, we have equipped all the approaches with a little domain knowledge, \ie the cardinality of $\mathcal{S}$ and $\mathcal{A}$. However, this can be avoided without a significant impact on the presented results. For every experiment, we will report the batch-size $N$, and the parameters $\xi$, $\zeta$ of IDE$^3$AL. CountBased and MaxEnt employ $\epsilon$-greedy policies having $\epsilon = \xi$ in all the experiments. In any plot, we will additionally provide the performance of a baseline policy, denoted as \emph{Random}, that randomly selects an action in every state. Detailed information about the presented results, along with an additional experiment, can be found in Appendix~\ref{apx:additional_experiments}.

\begin{figure*}[t]
\begin{minipage}[t]{0.41 \textwidth}
	\begin{table}[H]
		\small
		\setlength\tabcolsep{4pt}
		\begin{center}
		\begin{tabular}{c c c }
			\toprule
			& $H(\bm{d}^\pi)$ &  $\min \bm{d}^\pi$ \\
			\midrule
			\addlinespace[0.1cm]
 			Frobenius ($\xi=0$)				& $0.98$ & $6.4 \cdot 10^{-2}$ \\  
 			\addlinespace[0.1cm]
 			 Infinity ($\xi=0$)					& $0.98$ & $6.4 \cdot 10^{-2}$   \\  
 			\addlinespace[0.1cm]
 			Column Sum ($\xi=0$) 			& $0.98$ & $6 \cdot 10^{-2}$ \\
 			\midrule
 			\addlinespace[0.1cm]
 			Frobenius ($\xi=0.1$)				& $0.94$ & $4.1 \cdot 10^{-2}$ \\  
 			\addlinespace[0.1cm]
 			 Infinity ($\xi=0.1$)					& $0.89$ & $2.6 \cdot 10^{-2}$  \\  
 			\addlinespace[0.1cm]
 			Column Sum ($\xi=0.1$) 			& $0.95$ & $3.8 \cdot 10^{-2}$\\
 			\bottomrule
		\end{tabular}
		\end{center}
	\end{table}
\end{minipage}%
\hspace{0.02\linewidth}
\begin{minipage}[t]{0.55 \textwidth}
	\begin{figure}[H]
		\includegraphics[width=\textwidth]{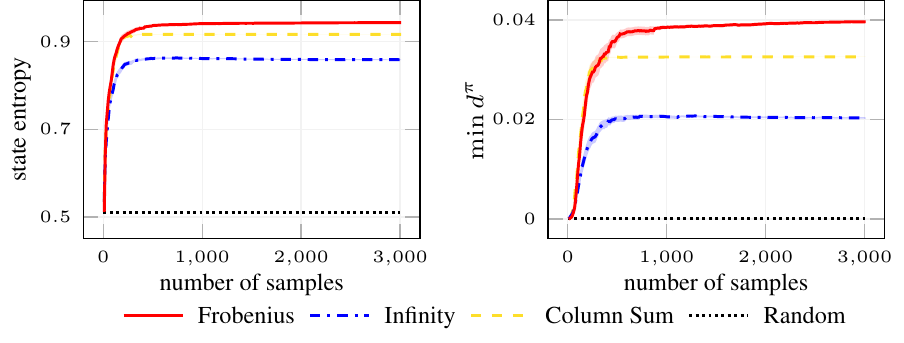}
	\end{figure}
\end{minipage}
\caption{State distribution entropy ($H(\bm{d}^\pi)$) and probability of the least favorable state ($\min \bm{d}^\pi$) for different objective formulations  on the Single Chain domain. We report exact solutions with $\zeta = 0$ (left), and approximate optimizations with $\xi=0.1$, $\zeta=0.7$, $N = 10$ (100 runs, 95$\%$ c.i.) (right).}
\label{fig:chain_comparison}
\end{figure*}
First, in Figure~\ref{fig:chain_comparison}, we compare the Problems~\eqref{eq:objective}, \eqref{eq:objective_f}, \eqref{eq:objective_cs} on the Single Chain environment. On one hand, we show the performance achieved by the exact solutions, \ie computed with a full knowledge of $\bm{P}$. While the plain formulations ($\xi = 0, \zeta = 1$) are remarkably similar, adding a constraint over the action entropy ($\xi = 0.1$) has a significantly different impact. On the other hand, we illustrate the performance of IDE$^3$AL, equipped with the alternative optimization objectives, in learning a good exploratory policy from samples. In this case, the Frobenius clearly achieves a better performance. In the following, we will report the results of IDE$^3$AL considering only the best-performing formulation, which, for all the presented experiments, corresponds to the Frobenius.

In Figure~\ref{fig:double_chain}, we show that IDE$^3$AL compares well against the other approaches in exploring the Double Chain domain. It achieves superior state entropy and state-action entropy, and it converges faster to the optimum. It displays also a higher probability of visiting the least favorable state, and it behaves positively in the estimation of $\hat{\bm{P}}$. 
Notably, the CountBased algorithm fails to reach high exploration due to a detachment problem~\cite{ecoffet2019go}, since it fluctuates between two exploratory policies that are greedy towards the two directions of the chain. By contrast, in a domain having a clear direction for exploration, such as the simpler Single Chain domain,  CountBased ties the explorative performances of IDE$^3$AL (Figure~\ref{fig:single_chain}).
On the other hand, MaxEnt is effective in the exploration performance, but much more slower to converge, both in the Double Chain and the Single Chain. Note that in Figure~\ref{fig:double_chain}, the model estimation error of MaxEnt starts higher than the other, since it employs a different strategy to fill the transition probabilities of never reached states, inspired by~\cite{brafman2002r}. In Figure~\ref{fig:knight_quest}, we present an experiment on the higher-dimensional Knight Quest environment. IDE$^3$AL achieves a remarkable state entropy, while MaxEnt struggles to converge towards a satisfying exploratory policy. CountBased (not reported in Figure~\ref{fig:knight_quest}, see Appendix~\ref{apx:additional_experiments}), fails to explore the environment altogether, oscillating between policies with low entropy.

In Figure~\ref{fig:goal_conditioned}, we illustrate how the exploratory policies learned in the Double Chain environment are effective to ease learning of any possible goal-conditioned policy afterwards. To this end, the exploratory policies, learned by the three approaches through 3000 samples (Figure~\ref{fig:double_chain}), are employed to collect samples in a fixed horizon (within a range from 10 to 100 steps). Then, a goal-conditioned policy is learned off-line through approximate value iteration~\cite{bertsekas1995dynamic} on this small amount of samples. The goal is to optimize a reward function that is 1 for the hardest state to reach (\ie the state that is less frequently visited with a random policy), 0 in all the other states. In this setting, all the methods prove to be rather successful \wrt the baseline, though IDE$^3$AL compares positively against the other strategies.
\begin{figure*}[t]

	\subfloat[][\small Double Chain]{ 
		\includegraphics[]{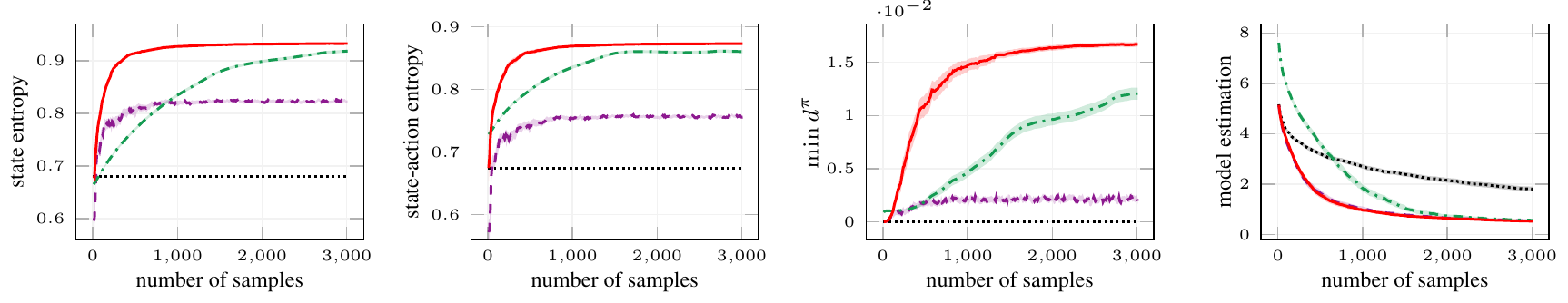}	
		\label{fig:double_chain}
	}%
	
	\subfloat[valign=t][\small Single Chain]{ 
		\includegraphics[]{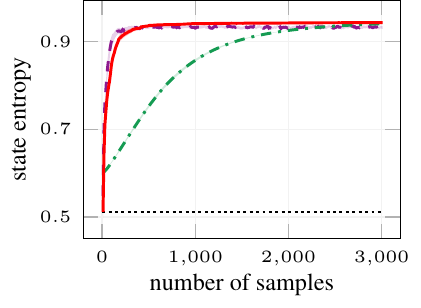}
		\label{fig:single_chain}
	}%
	\subfloat[valign=t][\small Knight Quest]{ 
		\hspace{-0.2cm}
		\includegraphics[]{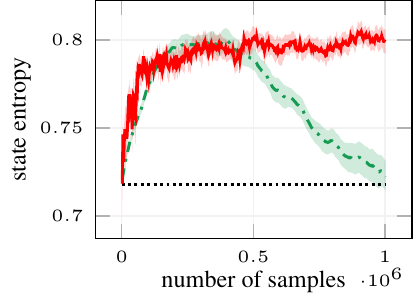}
		\label{fig:knight_quest}
	}%
	\subfloat[valign=t][\small Goal-conditioned]{ 
		\hspace{0.2cm}
		\includegraphics[]{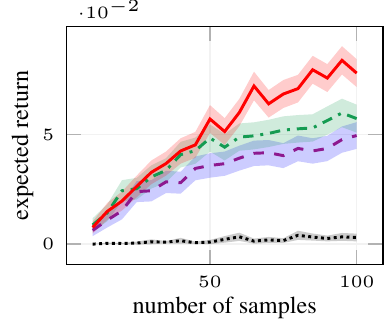}
		\label{fig:goal_conditioned}
	}%
	\subfloat[valign=t][\small Dual comparison]{ 
		\hspace{0.25cm}
		\includegraphics[]{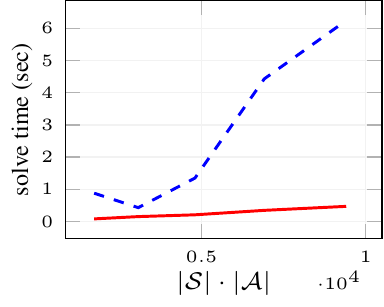} %
		\label{fig:execution_time}
	}%

	\centering \includegraphics[scale=1.1]{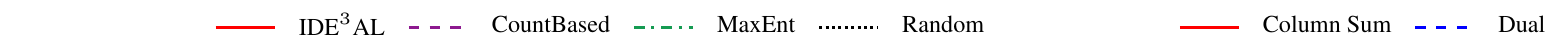}
		
	\caption{Comparison of the algorithms on exploration tasks (a, b, c) and goal-conditioned learning (d), with parameters $\xi=0.1$, $\zeta=0.7$, $N=10$ (a, b, d) and $\xi=0.01$, $\zeta=1$, $N=2500$ (c). (95$\%$ c.i. over 100 runs (a, b), 40 runs (c), 500 runs (d)). Comparison of the solve time (e) achieved by Column Sum and Dual formulations as a function of the number of variables. }
\end{figure*}
%\subfloat{ 
%	\adjustbox{valign=t}{
%			\setlength\tabcolsep{4pt}
%			\scriptsize
%			\hspace{0.6cm}
%			\begin{tabular}{c c c }
%				\toprule
%				& $H(\bm{d}^\pi)$ \\
%				\midrule
%				\midrule
% 				Frobenius 			& $0.98$ \\
% 				\addlinespace[0.1cm]
% 				 Infinity 				& $0.98$  \\  
% 				 \addlinespace[0.1cm]
% 				Column Sum		& $0.98$  \\
% 				\addlinespace[0.1cm]
% 				Dual						& $0.98$  \\
% 				\bottomrule
%			\end{tabular}
%		}
%}%

\section{Discussion}
\label{sec:discussion}

In this section, we first discuss how the proposed approach might be extended beyond tabular settings and an alternative formulation for the policy entropy optimization. Then, we consider some relevant work related to this paper.

\subsection{Potential Extension to Continuous}
\label{sec:continuous}

We believe that the proposed approach has potential to be extended to more general, continuous, settings, by exploiting the core idea of avoiding a probability concentration on a subset of outgoing transitions from a state. Indeed, a compelling feature of the presented lower bounds is that they characterize an infinite-step property, the entropy of the steady-state distribution, relying only on one-step quantities, i.e., without requiring to unroll several times the state transition matrix $\bm{P}^\pi$ . In addition to this, the lower bounds provide an evaluation for the current policy, and they can be computed for any policy. Thus, we could potentially operate a direct search in the policy space through the gradient of an approximation of these lower bounds. To perform the approximation we could use a kernel for a soft aggregation over regions of the, now continuous, state space.

\subsection{A Dual Formulation}
\label{sec:dual_formulation}

A potential alternative to deal with the optimization of the objective~\eqref{eq:maximum_entropy} is to consider its dual formulation. This is rather similar to the approach proposed in~\cite{tarbouriech2019active} to address the different problem of active exploration in an MDP. The basic idea is to directly maximize the entropy over the \emph{state-action} stationary distribution and then to recover the policy afterwards. In this setting, we define the state-action stationary distribution induced by a policy $\pi$ as $\bm{\omega}^\pi = \bm{d}^\pi \bm{\Pi}$, where $\bm{\omega}^\pi$ is a vector of size $|\mathcal{S}||\mathcal{A}|$ having elements $\omega^\pi (s,a)$. Since not all the distribution over the state-action space can be actually induced by a policy over the MDP, we characterize the set of feasible distributions:
\begin{align*}
	\Omega &= \big\lbrace  \omega \in \Delta( \mathcal{S} \times \mathcal{A} ) : \forall s \in \mathcal{S},  \\   
	  				&\sum_{a \in \mathcal{A}} \omega(s, a) = \sum_{s' \in \mathcal{S}, a' \in \mathcal{A}} P(s | s', a') \omega (s', a') \big\rbrace .
\end{align*}
Then, we can formulate the Dual Problem as:
\begin{equation} \label{eq:dual_objective}
	\begin{aligned}
		& \underset{\bm{\omega} \in \Omega}{\text{maximize}}
		& & H( \bm{\omega} )
    \end{aligned}
\end{equation}
Finally, let $\bm{\omega}^*$ denotes the solution of Problem~\eqref{eq:dual_objective}, we can recover the policy inducing the optimal state-action entropy as $\pi_{\omega^*} (a | s) = \omega^* (s, a) / \sum_{a' \in \mathcal{A}} \omega^* (s, a'), \forall s \in \mathcal{S},  \forall a \in \mathcal{A}$.

The Dual Problem displays some appealing features. Especially, the objective in~\eqref{eq:dual_objective} is already convex, so that it can be optimized right away, and it allows to explicitly maximize the entropy over the state-action space. 
Nonetheless, we think that this alternative formulation has three major shortcomings. First, the optimization of the convex program~\eqref{eq:dual_objective} could be way slower than the optimization of the linear programs Column Sum and Infinity \cite{grotschel1993ellipsoid}. Secondly, it does not allow to control the mixing time of the learned policy, which can be extremely relevant. Lastly, the applicability of the Dual Problem to continuous environments seems far-fetched. It is worth noting that, from an empirical evaluation, the dual formulation does not provide any significant benefit in the entropy of the learned policy \wrt the lower bounds formulations (see Appendix~\ref{apx:additional_experiments}).
Figure~\ref{fig:execution_time} shows how the solve time of the Column Sum scales better with the number of variables ($|\mathcal{S}| |\mathcal{A}|$) in incrementally large Knight Quest domains.

\subsection{Related Work}
\label{sec:related_work}

As discussed in the previous sections, \citeauthor{hazan2018provably}~\shortcite{hazan2018provably} consider an objective not that dissimilar to the one presented in this paper, even if they propose a fairly different solution to the problem. Their method learns a mixture of deterministic policies instead of a single stochastic policy. In a similar flavor, \citeauthor{tarbouriech2019active}~\shortcite{tarbouriech2019active} develop an approach, based on a dual formulation of the objective, to learn a mixture of stochastic policies for active exploration.

Other propose to intrinsically motivate the agent towards learning to reach all possible states in the environment~\cite{lim2012autonomous}. To extend this same idea from the tabular setting to the context of a continuous, high-dimensional state space, \citeauthor{pong2019skew}~\shortcite{pong2019skew} employ a generative model to seek for a maximum-entropy goal distribution. \citeauthor{ecoffet2019go}~\shortcite{ecoffet2019go} propose a method, called Go-Explore, to methodically reach any state by keeping an archive of any visited state and the best trajectory that brought the agent there. At each iteration, the agent draws a promising state from the archive, returns there replicating the stored trajectory (Go), then explores from this state trying to discover new states (Explore).

Another promising intrinsic objective is to make value out of the exploration phase by acquiring a set of reusable skills, typically formulated by means of the option framework~\cite{sutton1999between}. In~\cite{barto2004intrinsically}, a set of options is learned by maximizing an intrinsic reward that is generated at the occurrence of some, user-defined, salient event. The approach proposed by~\citeauthor{bonarini2006incremental}~\shortcite{bonarini2006incremental}, which presents some similarities with the work in~\cite{ecoffet2019go}, is based on learning a set of options to return with high probability to promising states. In their context, a promising state presents high unbalance between the probabilities of the input and output transitions~\cite{bonarini2006self}, so that it is both a hard state to reach, and a doorway to reach many other states. In this way, the learned options heuristically favor an even exploration of the state space.

\section{Conclusions}
\label{sec:conclusion}

In this paper, we proposed a new model-based algorithm, IDE$^3$AL, to learn highly exploring and fast mixing policies. The algorithm outputs a policy that maximizes a lower bound to the entropy of the steady-state distribution. We presented three formulations of the lower bound that differently tradeoff tightness with computational complexity of the optimization. The experimental evaluation showed that IDE$^3$AL is able to achieve superior performance than other approaches striving for uniform exploration of the environment, while it avoids the risk of detachment and derailment~\cite{ecoffet2019go}. Future works could focus on extending the applicability of the presented approach to non-tabular environments, following the blueprint in Section~\ref{sec:continuous}. We believe that this work provides a valuable contribution in view of solving the conundrum on what should a reinforcement learning agent learn in the absence of any reward coming from the environment.

\section*{Acknowledgments}
This work has been partially supported by the Italian MIUR PRIN 2017 Project ALGADIMAR ``Algorithms, Games, and Digital Market''.

\small{
	\bibliographystyle{aaai.bst}
	\bibliography{biblio}
}

%%%%%%%%%%%%%%%%%%%%%%%%%%%%%%%%%%%%%%%%%%%%%%%%%%%%%%%%%%%%%%%%%%%%%%%%%%%%%%%
%%%%%%%%%%%%%%%%%%%%%%%%%%%%%%%%%%%%%%%%%%%%%%%%%%%%%%%%%%%%%%%%%%%%%%%%%%%%%%%
% DELETE THIS PART. DO NOT PLACE CONTENT AFTER THE REFERENCES!
%%%%%%%%%%%%%%%%%%%%%%%%%%%%%%%%%%%%%%%%%%%%%%%%%%%%%%%%%%%%%%%%%%%%%%%%%%%%%%%
%%%%%%%%%%%%%%%%%%%%%%%%%%%%%%%%%%%%%%%%%%%%%%%%%%%%%%%%%%%%%%%%%%%%%%%%%%%%%%%
\onecolumn
\appendix

\section{Proofs}
\label{apx:proof}

\doublyStochastic*

\begin{proof}
	Let us recall the definition of the steady-state distribution of the MC induced by the policy $\pi$ over the MDP:
	\begin{equation*}
		d^\pi (s) = \sum\nolimits_{s' \in \mathcal{S}} P^\pi (s | s') d^\pi (s'), \quad \forall s \in \mathcal{S}.
	\end{equation*}
	If $\bm{d}^\pi$ is a uniform distribution we have:
	\begin{equation} \label{eq:column-stochastic}
		\sum\nolimits_{s' \in \mathcal{S}} P^\pi (s | s') = 1, \quad \forall s \in \mathcal{S},
	\end{equation}
	then, the state transition matrix $\bm{P}^\pi$ is column stochastic, while it is also row stochastic by definition.
	Conversely, if the matrix $\bm{P}^\pi$ is doubly stochastic, we aim to prove that a $\bm{d}^\pi$ that is not uniform cause an inconsistency in the stationary condition $\bm{d}^\pi = (\bm{P}^\pi)^T \bm{d}^\pi$. Let us consider a perturbation of the uniform $\bm{d}^\pi$, such that $d^\pi (s) = \frac{1}{|\mathcal{S}|}$ for all the states in $\mathcal{S}$ outside of:
	\begin{equation}
		d^\pi (s_h) = \frac{1}{|\mathcal{S}|} + \alpha, \quad d^\pi (s_l) = \frac{1}{|\mathcal{S}|} - \alpha,
	\end{equation}
	where $\alpha$ is a, sufficiently small, positive constant.
	Since $\bm{P}^\pi$ is doubly stochastic, the sum:
	\begin{equation}
		d^\pi (s_h) = \sum\nolimits_{s' \in \mathcal{S}} P^\pi (s_h | s') d^\pi (s'),
	\end{equation}
	is a convex combination of the elements in $\bm{d}^\pi$. Hence, for the stationary condition to hold, we must have $P^\pi(s_h | s_h) = 1$ and $P^\pi(s_h | s) = 0$ for all $s$ different from $s_h$. Nevertheless, a state with probability one on the self-loop cannot have a stationary distribution different from $0$ or $1$.
\end{proof}

% \selfLoops*
% 
% \begin{proof}
% 	Directly from Equation~\ref{eq:column-stochastic}, for any state in the state space, we have:
% 	\begin{align*}
% 		\sum\nolimits_{s' \in \mathcal{S}} P^\pi (s | s') &= \sum\nolimits_{s' \in \mathcal{S}} P^\pi (s' | s)\\
% 		\sum\nolimits_{s' \in \mathcal{S}} P^\pi (s | s') &= \sum\nolimits_{s' \in \mathcal{S}_{s}} P^\pi (s' | s) + P^\pi (s | s)\\
% 		\sum\nolimits_{s' \in \mathcal{S}_{s}}P^\pi (s' | s) &= \sum\nolimits_{s' \in \mathcal{S}_{s}}P^\pi (s | s').
% 	\end{align*}
% \end{proof}

\entropyBound*

\begin{proof}
 We start with rewriting the entropy of $\bm{d}^\pi$ as follows:
 	\begin{align}\label{eq:entropy}
	 H(\bm{d}^\pi) = -\sum_{s\in \mathcal{S}}d^\pi(s)\log\left( d^\pi(s)\right) = -\sum_{s\in \mathcal{S}}d^\pi(s)\log\left( \frac{d^\pi(s)}{|\mathcal{S}|}|\mathcal{S}|\right) = \log\left(|\mathcal{S}|\right)-D_{KL}(\bm{d}^\pi||\bm{d}^u),
	\end{align}
where $\bm{d}^u$ is the uniform distribution over the state space (all the entries equal to $\frac{1}{|\mathcal{S}|}$) and $D_{KL}(p||q)$ is the Kullback-Leibler (KL) divergence between distribution $p$ and $q$.\\
Using the reverse Pinsker inequality~\cite[p. 1012 and Lemma 6.3]{csiszar2006context}, we can upper bound the KL divergence between $\bm{d}^\pi$ and $\bm{d}^u$:
\begin{align}\label{eq:kl}
 D_{KL}(\bm{d}^\pi||\bm{d}^u) \leq \frac{\Norm[1]{\bm{d}^u - \bm{d}^\pi}^2}{\displaystyle \min_{s\in\mathcal{S}}d^u(s)} = |\mathcal{S}|\cdot\Norm[1]{\bm{d}^u - \bm{d}^\pi}^2.
\end{align}
The total variation between the two steady-state distributions $\bm{d}^\pi$ and $\bm{d}^u$ can in turn be upper bounded by (see~\cite{schweitzer1968perturbation}):
\begin{align}\label{eq:perturbation}
 \Norm[1]{\bm{d}^u - \bm{d}^\pi} \leq \Norm[\infty]{\bm{Z}}\Norm[\infty]{\bm{P}^u-\bm{\Pi P}},
\end{align}
where $\bm{Z}=\left(\bm{I}-\bm{P^u}+\bm{1}_{|\mathcal{S}|}\frac{1}{|\mathcal{S}|}\right)^{-1}$ is the fundamental matrix and $\bm{P}^u$ is any doubly-stochastic matrix ($\bm{P}^u\in\mathbb{P}$). Since the fundamental matrix associated to any doubly-stochastic matrix is row stochastic~\cite{hunter2010some}, then $\Norm[\infty]{\bm{Z}}=1$. Furthermore, since the bound in Equation~\eqref{eq:perturbation} holds for any $\bm{P}^u\in\mathbb{P}$, we can rewrite the bound as follows:
\begin{align}\label{eq:perturbation_ds}
 \Norm[1]{\bm{d}^u - \bm{d}^\pi} \leq \inf_{\bm{P}^u\in\mathbb{P}}\Norm[\infty]{\bm{P}^u-\bm{\Pi P}}.
\end{align}
Combining Equations~\eqref{eq:kl} and~\eqref{eq:perturbation_ds} we get an upper bound to the KL divergence, which, once replaced in Equation~\eqref{eq:entropy}, provides the lower bound in the statement and concludes the proof.

\end{proof}

\entropyBoundF*

\begin{proof}
 From the properties of the matrix norms~\cite{petersen2008matrix}, we have that for any $n\times n$ matrix $\bm{M}$ it holds:
$$  \Norm[F]{\bm{M}} \leq \frac{1}{\sqrt{n}}\Norm[\infty]{\bm{M}}.$$
As a consequence:
$$\inf_{\bm{P}^u\in\mathbb{P}} \Norm[F]{\bm{P}^u-\bm{\Pi P}}^2 \geq \frac{1}{|\mathcal{S}|}\Norm[\infty]{\overline{\bm{P}^u}-\bm{\Pi P}}^2 \geq  \frac{1}{|\mathcal{S}|} \inf_{\bm{P}^u\in\mathbb{P}}\Norm[\infty]{\bm{P}^u-\bm{\Pi P}}^2,$$
where $\overline{\bm{P}^u}=\arg\inf_{\bm{P}^u\in\mathbb{P}}\Norm[F]{\bm{P}^u-\bm{\Pi P}}^2$.
Combining this inequality with the result in Theorem~\ref{thr:entropy_bound} concludes the proof.
\end{proof}

\entropyBoundCS*

\begin{proof}
 We start with defining the vector $\bm{c}$ that results from the difference between the vector of ones $\bm{1}_{|\mathcal{S}|}$ and the vector of the column sums: $\bm{c} = \left(\bm{I}-(\bm{\Pi P)^T}\right)\cdot \bm{1}_{|\mathcal{S}|}$. We denote with $\widehat{\bm{P}_x}$ the matrix obtained from $\bm{P}^\pi$ by adding $\bm{c}^T$ to the row corresponding to state $x$:
 
 $$\widehat{\bm{P}_x}(s,s') = \left\{\begin{aligned}
                                    &\bm{P}^\pi(s,s')+c(s'),\quad&\text{if $s=x$}\\
                                    &\bm{P}^\pi(s,s'), \quad&\text{otherwise}
                                   \end{aligned}\right.$$
 It is worth noting that, since $\sum_{s\in\mathcal{S}}c(s)=0$, the column sums and the row sums of matrix $\widehat{\bm{P}_x}$ are all equal to $1$. Nonetheless, $\widehat{\bm{P}_x}$ is not guaranteed to be doubly stochastic since its entries can be lower than $0$. However, it is possible to show that 
 $$\inf_{\bm{P}^u\in\mathbb{P}}\Norm[\infty]{\bm{P}^u-\bm{P}^\pi}\leq \Norm[\infty]{\widehat{\bm{P}_x}-\bm{P}^\pi} = \Norm[1]{\bm{c}}.$$
 When $\widehat{\bm{P}_x}$ is doubly stochastic, the above inequality holds by definition. When $\widehat{\bm{P}_x}$ has negative entries, it is always possible to transform it to a doubly stochastic matrix without increasing the $L_\infty$ distance from $\bm{P}^\pi$.
 In order to remove the negative entries of $\widehat{\bm{P}_x}$, we need to trade probability with the other states, so as to preserve the row sum. Each state that gives probability to state $x$, will receive the same amount of probability taken by the columns corresponding to positive values of the vector $\bm{c}$. In order to illustrate this procedure, we consider a four-state MDP and a policy $\pi$ that leads to the following state transition matrix:
 $$\bm{P}^\pi=\begin{bmatrix}
                     0.8 & 0.2 & 0 & 0 \\
                     0 & 0.9 & 0.1 & 0\\
                     0.3 & 0.5 & 0.1 & 0.1\\
                     0.8 & 0.1 & 0.1 & 0
                    \end{bmatrix}.$$
 The corresponding vector $\bm{c}$ is
 $$\bm{c}=\begin{bmatrix}
           -0.9 & -0.7 & 0.7 & 0.9
          \end{bmatrix}^T.$$
Summing $\bm{c}^T$ to the first row of $\bm{P}^\pi$ we get:
 $$\widehat{\bm{P}_{s_1}}=\begin{bmatrix}
                     -0.1 & -0.5 & 0.7 & 0.9 \\
                     0 & 0.9 & 0.1 & 0\\
                     0.3 & 0.5 & 0.1 & 0.1\\
                     0.8 & 0.1 & 0.1 & 0
                    \end{bmatrix}.$$
Since we have two negative elements, to get a doubly stochastic matrix we can modify the matrix as follows:
\begin{itemize}
 \item move $0.1$ from element $(3,1)$ to $(1,1)$ and (to keep the row sum equal to 1) move $0.1$ from $(1,3)$ to $(3,3)$
 \item move $0.5$ from element $(2,2)$ to $(1,2)$ and (to keep the row sum equal to 1) move $0.5$ from $(1,3)$ to $(2,3)$
\end{itemize}
The resulting matrix is:
$$\widehat{\bm{P}}=\begin{bmatrix}
                     0 & 0 & 0.1 & 0.9 \\
                     0 & 0.4 & 0.6 & 0\\
                     0.2 & 0.5 & 0.2 & 0.1\\
                     0.8 & 0.1 & 0.1 & 0
                    \end{bmatrix}\in \mathbb{P}.$$
 The described procedure yields a doubly stochastic matrix $\widehat{\bm{P}}$ such that $\Norm[\infty]{\widehat{\bm{P}}-\bm{P}^\pi}\leq\Norm[1]{\bm{c}}$.
 Combining this upper bound with the result in Theorem~\ref{thr:entropy_bound} concludes the proof.
\end{proof}

\begin{coroll}\label{cor:frobenius}
 The bound in Theorem~\ref{thr:entropy_bound} is never less than the bound in Theorem~\ref{thr:entropy_bound_f}.
\end{coroll}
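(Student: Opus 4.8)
The plan is to reduce the comparison of the two lower bounds to the single elementary norm inequality that is already used inside the proof of Theorem~\ref{thr:entropy_bound_f}. Write $n = |\mathcal{S}|$ and abbreviate $\bm{P}^\pi = \bm{\Pi}\bm{P}$. Both bounds carry the same leading term $\log n$, so the bound of Theorem~\ref{thr:entropy_bound} is at least the bound of Theorem~\ref{thr:entropy_bound_f} if and only if
\[
n\,\inf_{\bm{P}^u\in\mathbb{P}}\Norm[\infty]{\bm{P}^u-\bm{P}^\pi}^2 \;\le\; n^2\,\inf_{\bm{P}^u\in\mathbb{P}}\Norm[F]{\bm{P}^u-\bm{P}^\pi}^2 ,
\]
that is, after dividing by $n$, if and only if $\inf_{\bm{P}^u\in\mathbb{P}}\Norm[\infty]{\bm{P}^u-\bm{P}^\pi}^2 \le n\,\inf_{\bm{P}^u\in\mathbb{P}}\Norm[F]{\bm{P}^u-\bm{P}^\pi}^2$. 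This last inequality is the only thing that has to be established.

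To prove it, first recall the standard bound $\Norm[\infty]{\bm{M}}\le\sqrt{n}\,\Norm[F]{\bm{M}}$, valid for every $n\times n$ matrix $\bm{M}$ (Cauchy--Schwarz applied to the row that realizes the maximum absolute row sum, then bounding that row's $L_2$ norm by the full Frobenius norm); equivalently, this is the inequality $\Norm[F]{\bm{M}}\ge \tfrac{1}{\sqrt n}\Norm[\infty]{\bm{M}}$ already invoked in the proof of Theorem~\ref{thr:entropy_bound_f}. Since $\mathbb{P}$ is compact, let $\overline{\bm{P}^u}\in\mathbb{P}$ attain $\inf_{\bm{P}^u\in\mathbb{P}}\Norm[F]{\bm{P}^u-\bm{P}^\pi}$. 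Applying the norm inequality to $\bm{M}=\overline{\bm{P}^u}-\bm{P}^\pi$ and using that $\overline{\bm{P}^u}$ is merely feasible (not necessarily optimal) for the $L_\infty$ problem,
\[
\inf_{\bm{P}^u\in\mathbb{P}}\Norm[\infty]{\bm{P}^u-\bm{P}^\pi}^2 \;\le\; \Norm[\infty]{\overline{\bm{P}^u}-\bm{P}^\pi}^2 \;\le\; n\,\Norm[F]{\overline{\bm{P}^u}-\bm{P}^\pi}^2 \;=\; n\,\inf_{\bm{P}^u\in\mathbb{P}}\Norm[F]{\bm{P}^u-\bm{P}^\pi}^2 .
\]
Multiplying through by $n$ and subtracting from $\log n$ yields exactly the claimed inequality between the two bounds.

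There is essentially no hard step here; the single point that needs a little care is that the two infima are attained at different doubly stochastic matrices, so one must not try to compare the $L_\infty$ and Frobenius infima entry by entry. The correct move is to evaluate the $L_\infty$ objective at the Frobenius-optimal $\overline{\bm{P}^u}$ (rather than conversely), which keeps every inequality pointing the same way; compactness of $\mathbb{P}$ guarantees the optimizer exists, but even a near-optimal $\overline{\bm{P}^u}$ together with a limiting argument would suffice. In effect the corollary is just a repackaging of the chain of inequalities displayed in the proof of Theorem~\ref{thr:entropy_bound_f}.
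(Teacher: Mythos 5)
Your proof is correct and is essentially identical to the paper's: both rest on the norm inequality $\Norm[\infty]{\bm{M}}\le\sqrt{n}\,\Norm[F]{\bm{M}}$ applied at the Frobenius-optimal $\overline{\bm{P}^u}$, which is then treated as a feasible (suboptimal) point for the $L_\infty$ problem. Your added remark on compactness of $\mathbb{P}$ only makes explicit what the paper's use of $\arg\inf$ leaves implicit.
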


\begin{proof}
 From the properties of the matrix norms~\cite{petersen2008matrix}, we have that for any $n\times n$ matrix $\bm{M}$ it holds:
 \begin{equation}
  \frac{\Norm[\infty]{\bm{M}}}{\sqrt{n}} \leq \Norm[F]{\bm{M}} \leq \sqrt{n}\Norm[\infty]{\bm{M}}.
 \end{equation}
As a consequence:
$$|\mathcal{S}|^2\inf_{\bm{P}^u\in\mathbb{P}}\Norm[F]{\bm{P}^u-\bm{\Pi P}}^2 \geq |\mathcal{S}|\Norm[\infty]{\overline{\bm{P}^u}-\bm{\Pi P}}^2 \geq |\mathcal{S}|\inf_{\bm{P}^u\in\mathbb{P}}\Norm[\infty]{\bm{P}^u-\bm{\Pi P}}^2,$$
where $\overline{\bm{P}^u}=\arg\inf_{\bm{P}^u\in\mathbb{P}}\Norm[F]{\bm{P}^u-\bm{\Pi P}}$.
It follows that 
$$\log|\mathcal{S}| -  |\mathcal{S}|^2\inf_{\bm{P}^u\in\mathbb{P}}\Norm[F]{\bm{P}^u-\bm{\Pi P}}^2 \leq \log|\mathcal{S}| - |\mathcal{S}|\inf_{\bm{P}^u\in\mathbb{P}}\Norm[\infty]{\bm{P}^u-\bm{\Pi P}}^2.$$
\end{proof}

% \begin{coroll}\label{cor:columnsum}
%  The bound in Theorem~\ref{thr:entropy_bound} is never less than the bound in Theorem~\ref{thr:entropy_bound_cs}.
% \end{coroll}
% 
% \begin{proof}
%  
% \end{proof}

\section{Optimization Problems}\label{app:Optimization}
\subsection{Linear program formulation of Problem~\eqref{eq:objective}}\label{app:lp_infinity}
Problem~\eqref{eq:objective} can be rewritten as follows:
\begin{equation}  \label{eq:lp_infinity}
	\begin{aligned}
		& \underset{\bm{P}^u, \bm{\Pi},v}{\text{minimize}}
		& & v \\
		& \text{subject to}
		& & \sum_{s'\in\mathcal{S}}|P^{u} (s'|s)-P^{\pi} (s'|s)| \leq v, \quad \forall s \in \mathcal{S}, \\
		& & & \bm{\Pi} (s, (s,a)) \geq 0, \quad \forall s \in \mathcal{S}, \quad \forall a \in \mathcal{A}, \\
		& & & P^u (s'|s) \geq 0, \quad \forall s \in \mathcal{S}, \quad \forall s' \in \mathcal{S}, \\
		& & & \sum_{a\in\mathcal{A}}\bm{\Pi} (s, (s,a)) = 1, \quad \forall s \in \mathcal{S}, \\
		& & & \sum_{s'\in\mathcal{S}}P^{u} (s'|s) = 1, \quad \forall s \in \mathcal{S}, \\
		& & & \sum_{s'\in\mathcal{S}}P^{u} (s|s') = 1, \quad \forall s \in \mathcal{S}.
	\end{aligned}
\end{equation}
The first set of inequality constraints can be transformed in a set of linear inequality constraints. Each constraint is obtained by removing the absoulte values and considering a different permutation of the signs in front of the terms in the summation. As a result, if the original summation contains $n$ elements, the number of linear constraints is $2^n$. Since this process needs to be done for each state $s\in\mathcal{S}$, the first set of constraints can be replaced by $|\mathcal{S}|2^{|\mathcal{S}|}$.

\subsection{Linear program formulation of Problem~\eqref{eq:objective_cs}}\label{app:lp_columnsum}

Let $\bm{v}$ be a vector of length $|\mathcal{S}|$.
Problem~\eqref{eq:objective} can be rewritten as follows:
\begin{equation}
	\begin{aligned}
		& \underset{\bm{\Pi},\bm{v}}{\text{minimize}}
		& & \sum_{s\in\mathcal{S}}v(s) \\
		& \text{subject to}
		& & 1-\sum_{s'\in\mathcal{S}}P^{\pi} (s|s') \leq v(s), \quad \forall s \in \mathcal{S}, \\
		& & &\sum_{s'\in\mathcal{S}}P^{\pi} (s|s')-1 \leq v(s), \quad \forall s \in \mathcal{S}, \\
		& & & \bm{\Pi} (s, (s,a)) \geq 0, \quad \forall s \in \mathcal{S}, \quad \forall a \in \mathcal{A}, \\
		& & & \sum_{a\in\mathcal{A}}\bm{\Pi} (s, (s,a)) = 1, \quad \forall s \in \mathcal{S}.
	\end{aligned}
\end{equation}

\section{Illustrative Example}
\label{apx:three_states}

The example in Figure~\ref{fig:three_states} shows that the Frobenius norm can better capture the distance between a transition matrix $\bm{P}$ and a doubly stochastic $\bm{P}^u \in \mathbb{P}$ \wrt the $L_\infty$-norm. Indeed, the $L_\infty$-norm only accounts for the state which corresponds to the maximum absolute row sum of the difference $\bm{P}^u - \bm{P}$, while the Frobenius norm considers the difference across all the states. In the example, we see two transition matrices $\bm{P}_1$ and $\bm{P}_2$ that are equally bad in the worst state ($s_0$), thus, have equal $L_\infty$-norm. However, $\bm{P}_1$ is fairly unbalanced also in the other states, where $\bm{P}_2$ is uniform instead, and so it is clearly preferable in view of the uniform exploration objective.

\begin{figure*}[t]
\begin{minipage}[b]{0.6 \textwidth}
	\begin{figure}[H]
	\centering
		\includegraphics[scale=1, valign=t]{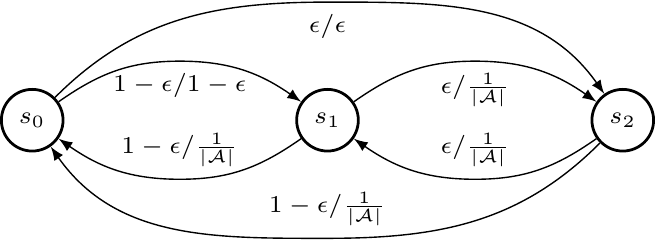}
	\end{figure}
\end{minipage}
\begin{minipage}[b]{0.3 \textwidth}
	\begin{table}[H]
		\footnotesize
		\begin{center}
		\begin{tabular}{c c c }
			\toprule
			& $\bm{P}_1$ &  $\bm{P}_2$  \\
			\midrule
			\addlinespace[0.1cm]
 			$L_\infty$ 				& $0.60$ & $0.60$ \\  
 			\addlinespace[0.1cm]
 			 Frobenius 				& $0.73$ & $0.42$   \\  
 			\addlinespace[0.1cm]
 			$H(\bm{d}^\pi)$ 	& $0.93$ & $0.99$\\
 			\bottomrule
		\end{tabular}
		\end{center}
	\end{table}
\end{minipage}
\caption{Graphical representation of a Markov chain (left), having on the edges the transition probabilities $\bm{P}_1 (s_i, s_j) / \bm{P}_2 (s_i, s_j)$. On the right, a table providing the values of $L_\infty$-norm, and Frobenius norm of the difference \wrt a uniform $\bm{P}^u$, along with state distribution entropies.}
\label{fig:three_states}
\end{figure*}

\section{Experimental Evaluation: Further Details}
\label{apx:additional_experiments}

In the following, we provide further details on the experimental evaluation covered by Section~\ref{sec:experiments}. First, for the sake of clarity, we report the pseudo-code of MaxEnt and CountBased algorithms, which we have compared with our approach. Then, for any presented experiment, we recap the full set of parameters employed, we show the value of the exact solutions, and a characterization of the solve time, for all the different formulations. Finally, we illustrate an additional experiment in the River Swim domain~\cite{strehl2008analysis}.

As a side note, it is worth reporting that our implementation of the optimization Problems~\eqref{eq:objective}, \eqref{eq:objective_f}, \eqref{eq:objective_cs} is based on the \emph{CVXPY} framework~\cite{cvxpy} and makes use of the \emph{MOSEK} optimizer.

\subsection{Algorithms: Pseudo-Code}

In Algorithm~\ref{alg:maxent}, we report the pseudo-code of the MaxEnt algorithm~\cite{hazan2018provably}. In Algorithm~\ref{alg:countb} the pseudo-code of the CountBased algorithm, which is inspired by the exploration bonus of MBIE-EB~\cite{strehl2008analysis}.
\begin{algorithm}[H]
    \caption{MaxEnt}
    \label{alg:maxent}
        \begin{algorithmic}[H]
        	\Statex \textbf{Input}: $\epsilon$, batch size $N$, step size $\eta$
        	\State Initialize $\pi_0$, transition counts $T \in \mathbb{N}^{|\mathcal{S}|^2 \times |\mathcal{A}|}$, state visitation counts $C \in \mathbb{N}^{|\mathcal{S}|}$
        	\State Set $\alpha_0 = 1$ and $\mathcal{D}_0 = \lbrace  \pi_0  \rbrace$, let $\pi_{\text{mix}, 0} = (\mathcal{D}_0, \alpha_0)$
        	\For{ $i = 0, 1, 2, \ldots,$ until convergence }
        		\State Sample $\pi \sim \pi_{\text{mix}, i}$, collect $N$ steps with $\pi$, and update $T, C$
        		\State Estimate the transition model as: \\
        						$ \quad \quad \quad \hat{P}_i (s' | s, a) = 
        						\begin{cases} 
        							\frac{T(s' | s, a)}{\sum_{s'} T(s' | s, a)},& \text{if} \; T( \cdot | s, a) > 0 \\
        							1,  & \text{if} \; T( \cdot | s, a) = 0 \; \text{and} \; s' = s \\
        							0,  \; & \text{otherwise} \\
        						\end{cases}$
        		\State Define the reward function as: \\
        						$ \quad \quad \quad R_i (s) = 
        						\begin{cases} 
        							\big( \nabla H(\bm{d}^{\pi_{\text{mix}, i}}) \big)_s,&\text{if} \; C(s) > 0 \\
        							\; \log |\mathcal{S}|, & \text{otherwise} \\
        						\end{cases}$
        		\State Compute $\pi_{i+1}$ as the $\epsilon$-greedy policy for the MDP $\mathcal{M} = (\mathcal{S}, \mathcal{A}, \hat{P}, R_i, d_0)$
        		\State Set $\alpha_{i + 1} = ((1 - \eta) \alpha_i, \eta), \mathcal{D}_{i + 1} = (\pi_0, \dots, \pi_i, \pi_{i + 1})$, update $\pi_{\text{mix}, i + 1} = (\mathcal{D}_{i + 1}, \alpha_{i + 1})$
        	\EndFor
        	\Statex \textbf{Output}: exploratory policy $\pi_{\text{mix}, i + 1}$
        \end{algorithmic}
\end{algorithm}

\begin{algorithm}[H]
    \caption{CountBased}
    \label{alg:countb}
        \begin{algorithmic}[H]
        	\Statex \textbf{Input}: $\epsilon$, batch size $N$
        	\State Initialize $\pi_0$, transition counts $C \in \mathbb{N}^{|\mathcal{S}|^2 \times |\mathcal{A}|}$, state visitation counts $N \in \mathbb{N}^{|\mathcal{S}|}$
        	\For{ $i = 0, 1, 2, \ldots,$ until convergence }
        		\State Collect N steps with $\pi_i$ and update $C, N$
        		\State Estimate the transition model as: \\
        						$ \quad \quad \quad \hat{P}_i (s' | s, a) = 
        						\begin{cases} 
        							\frac{C(s' | s, a)}{\sum_{s'} C(s' | s, a)},&\scriptstyle{\text{if} \; C( \cdot | s, a) > 0} \\
        							\scriptstyle{1 / |\mathcal{S}|},&\scriptstyle{\text{otherwise}} \\
        						\end{cases}$
        		\State Define the reward function as $R_i (s) = \frac{1}{N(s) + 1} $
        		\State Compute $\pi_{i+1}$ as the $\epsilon$-greedy policy for the MDP $\mathcal{M} = (\mathcal{S}, \mathcal{A}, \hat{P}, R_i, d_0)$
        	\EndFor
        	\Statex \textbf{Output}: exploratory policy $\pi_i$
        \end{algorithmic}
\end{algorithm}

\subsection{Experiments}

For any experiment covered by Section~\ref{sec:experiments}, we provide (Table~\ref{tab:exp_details}) the cardinality of the state space $|\mathcal{S}|$, the cardinality of the action space $|\mathcal{A}|$, the value of the parameters $\xi$, $\zeta$ of IDE$^3$AL, the parameter $\epsilon$ of MaxEnt and CountBased, the number of iterations $I$, and the batch-size $N$, which are shared by all the approaches. For every domain, we report (Table~\ref{tab:optimal_values}) the value of the exact solution of Problems~\eqref{eq:objective}, \eqref{eq:objective_f}, \eqref{eq:objective_cs}, \eqref{eq:dual_objective}. In Table~\ref{tab:computation_times}, we provide the time to find a solution for the Problems~\eqref{eq:objective}, \eqref{eq:objective_f}, \eqref{eq:objective_cs} that we experienced running the optimization on a single-core general-purpose CPU. For any experiment (except Knight Quest), we show additional figures reporting the performance of all the formulations of IDE$^3$AL.

The River Swim environment~\cite{strehl2008analysis} mimic the task of crossing a river either swimming upstream or downstream. Thus, the action of swimming upstream fails with high probability, while the action of swimming downstream is deterministic. Due to this imbalance in the effort needed to cross the environment in the two directions, it is a fairly hard task in view of uniform exploration.

\begin{table}[H]
		\small
		\setlength\tabcolsep{12pt}
		\begin{center}
		\begin{tabular}{c c c c c c c c}
			\toprule
			& $|\mathcal{S}|$ & $|\mathcal{A}|$ & $\xi$ & $\zeta$ & $\epsilon$ & $I$ & $N$ \\
			\midrule
			Single Chain & $10$ & $2$ & $0.1$ & $0.7$ & $0.1$ & $300$ & $10$ \\
			Double Chain & $20$ & $2$ & $0.1$ & $0.7$ & $0.1$ & $300$ & $10$ \\
			Knight Quest & $360$ & $8$ & $0.01$ & $1$ & $0.01$ & $400$ & $2500$ \\
			River Swim & $6$ & $2$ & $0.1$ & $0.7$ & $0.1$ & $300$ & $10$ \\
 			\bottomrule
		\end{tabular}
		\caption{Full set of parameters employed in the presented experiments.}
		\label{tab:exp_details}
		\end{center}
\end{table}

\begin{table}[H]
		\small
		\setlength\tabcolsep{8pt}
		\begin{center}
		\begin{tabular}{c c c c}
			\toprule
			& \multicolumn{3}{c}{$H(\bm{d}^\pi)$}\\
			\midrule
			& Single Chain & Double Chain & Knight Quest \\
			\midrule
			Infinity 				& 	$0.983$	& $0.967$	& $0.720$ \\
			Frobenius 			& 	$0.984$	& 	$0.970$	& $0.850$ \\
			Column Sum 		& $0.985$	& 	$0.961$	& $0.837$ \\
			Dual 					& $0.985$	& 	$0.971$	& $0.859$ \\
 			\bottomrule
		\end{tabular}
		\caption{State distribution entropy related to the exact solution for all the problem formulations.}
		\label{tab:optimal_values}
		\end{center}
\end{table}

\begin{table}[H]
		\small
		\setlength\tabcolsep{8pt}
		\begin{center}
		\begin{tabular}{c c c c}
			\toprule
			& \multicolumn{3}{c}{Solve Time (sec)}\\
			\midrule
			& Single Chain & Double Chain & Knight Quest \\
			\midrule
			Infinity 				& $\simeq 10^{-3}$ 	&  $\simeq 10^{-2}$	& $3.23$ \\
			Frobenius 			& $\simeq 10^{-3}$ 	& 	$\simeq 10^{-2}$	& $3.07$ \\
			Column Sum 		& $\simeq 10^{-5}$ 	& 	$\simeq 10^{-4}$	& $0.03$ \\
 			\bottomrule
		\end{tabular}
		\caption{Time to solve the optimization for all the problem formulations (reported in seconds).}
		\label{tab:computation_times}
		\end{center}
\end{table}

\subsection*{Single Chain}

\begin{figure}[H]
	\centering
	   	\includegraphics{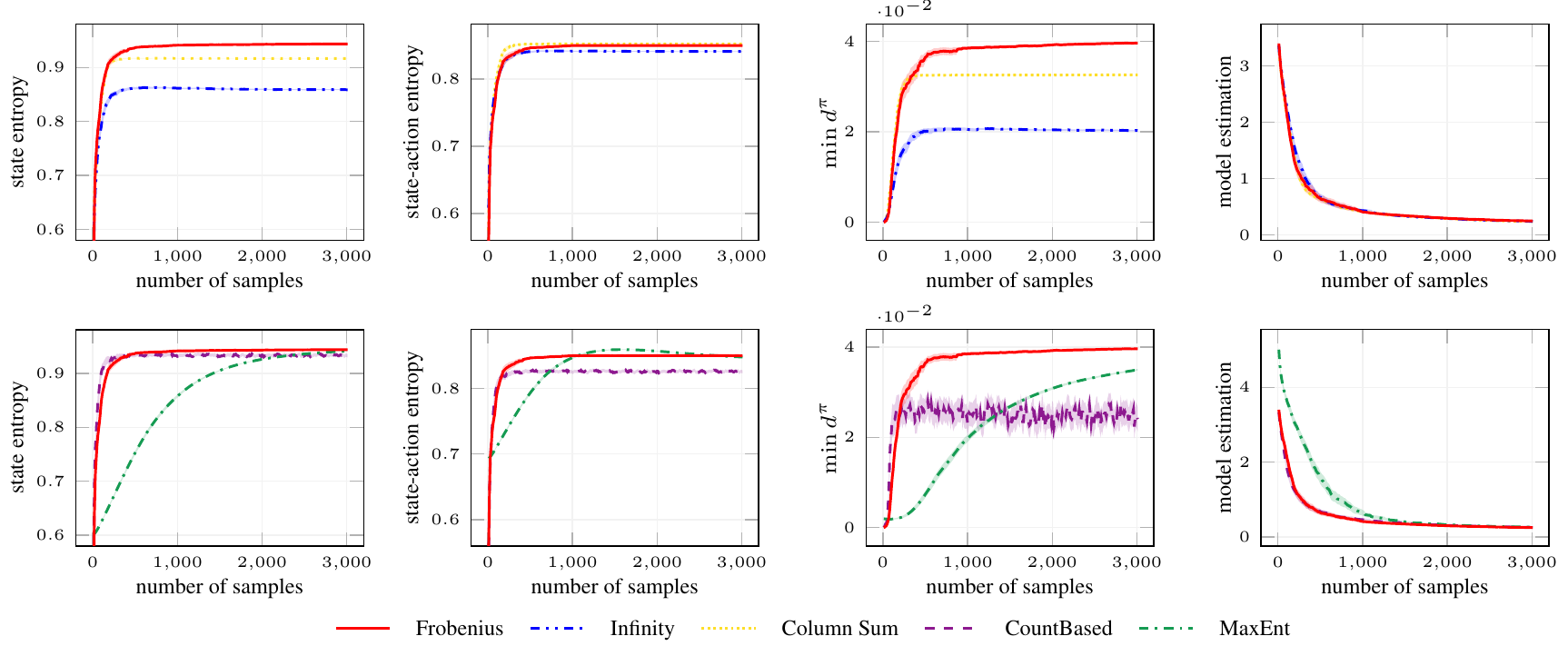}
 		\caption{Comparison of the algorithms' exploration performances in the Single Chain environment with parameters $\xi=0.1$, $\zeta=0.7$, $N=10$ (100 runs, 95$\%$ c.i.).}
 		\label{fig:single_chain_complete}
\end{figure}

\subsection*{Double Chain}

\begin{figure}[H]
	\centering
	   	\includegraphics{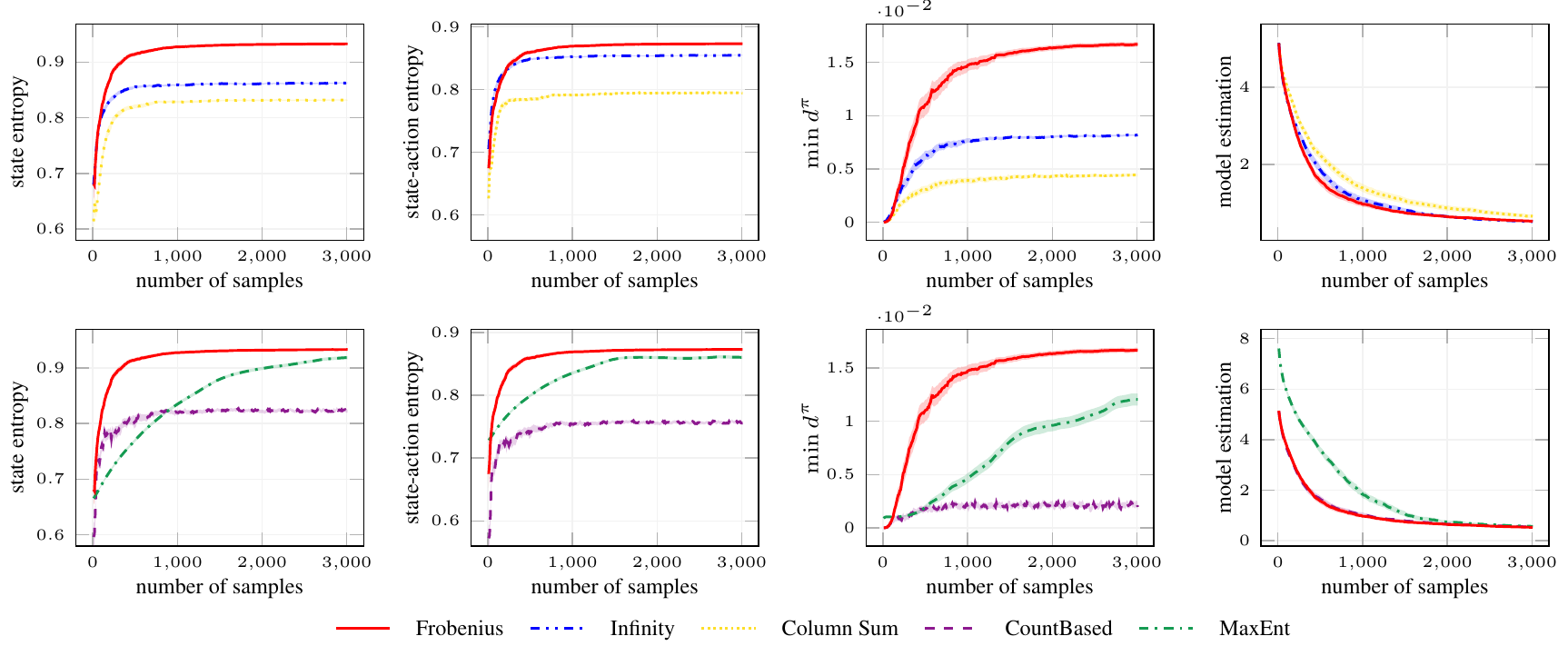}
 		\caption{Comparison of the algorithms' exploration performances in the Single Chain environment with parameters $\xi=0.1$, $\zeta=0.7$, $N=10$ (100 runs, 95$\%$ c.i.).}
 		\label{fig:double_chain_complete}
\end{figure}

\subsection*{Knight Quest}

\begin{figure}[H]
	\centering
	   	\includegraphics{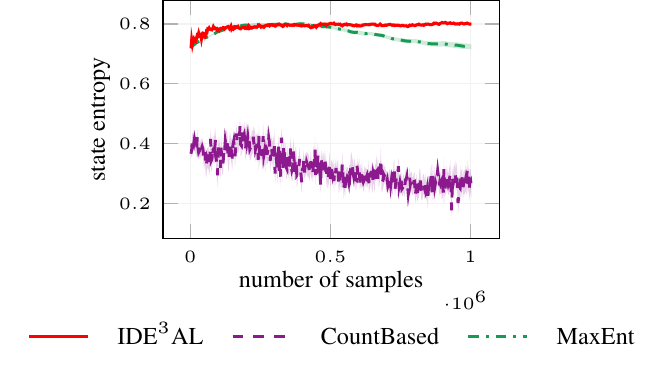}
 		\caption{Comparison of the algorithms' state entropy in the Knight Quest environment with parameters $\xi=0.01$, $\zeta=1$, $N=2500$ (40 runs, 95$\%$ c.i.).}
 		\label{fig:knight_quest_complete}
\end{figure}

\subsection*{Goal-Conditioned}

\begin{figure}[H]
	\centering
	   	\includegraphics{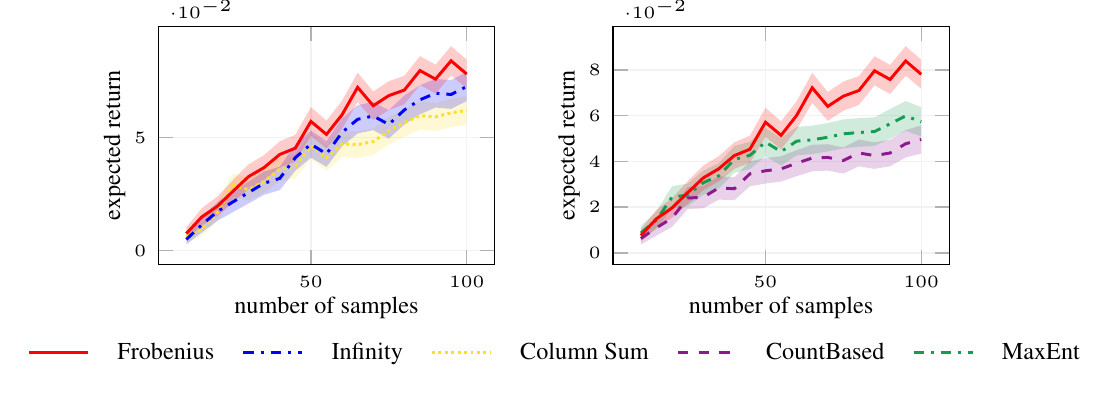}
 		\caption{Comparison of the algorithms on a goal-conditioned learning task in the Double Chain environment with parameters $\xi=0.1$, $\zeta=0.7$, $N=10$ (500 runs, 95$\%$ c.i.).}
 		\label{fig:goal_conditioned_complete}
\end{figure}

\subsection*{River Swim}

\begin{figure}[H]
	\centering
	   	\includegraphics{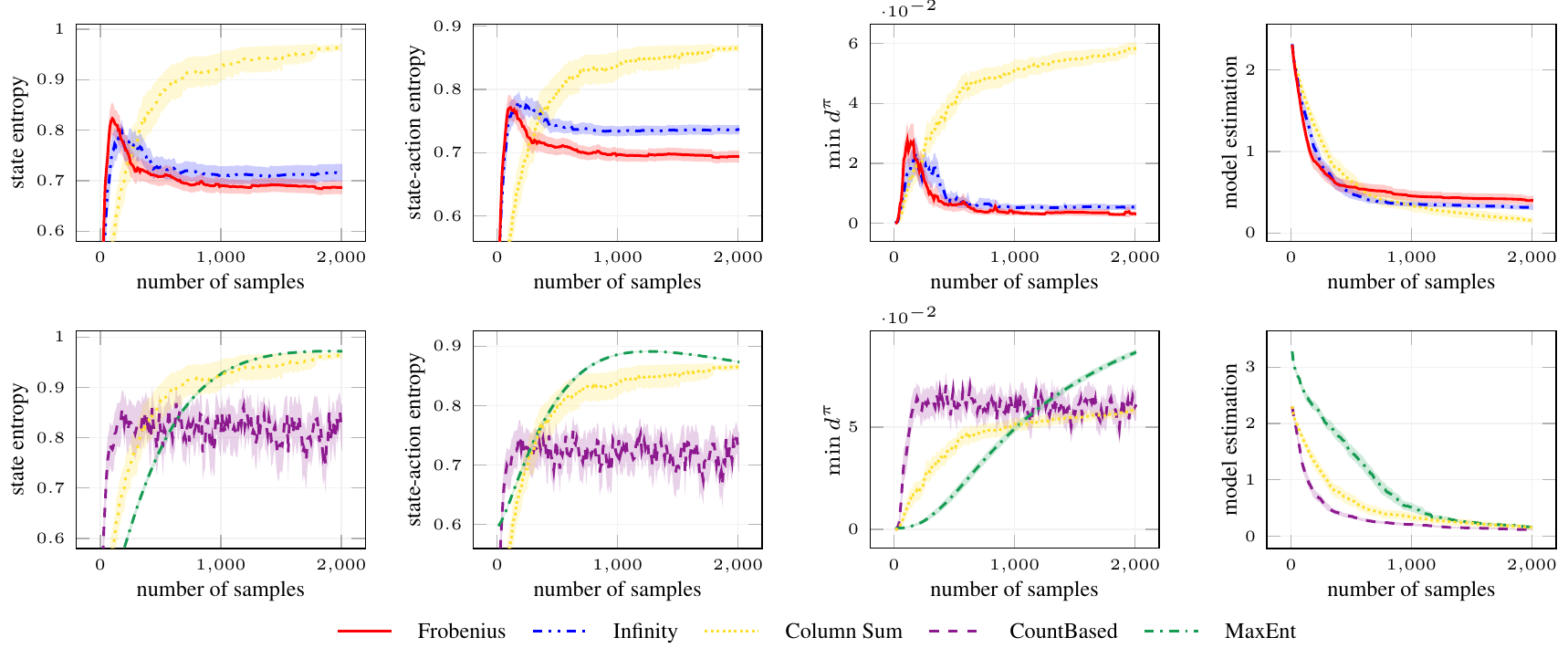}
 		\caption{Comparison of the algorithms' exploration performances in the River Swim environment with parameters $\xi=0.1$, $\zeta=0.7$, $N=10$ (100 runs, 95$\%$ c.i.).}
 		\label{fig:river_complete}
\end{figure}

\end{document}